\documentclass{article}

\usepackage{arxiv}

\usepackage[utf8]{inputenc}
\usepackage[T1]{fontenc}
\usepackage{hyperref}
\usepackage{url}
\usepackage{booktabs}
\usepackage{amsfonts}
\usepackage{amsmath}
\usepackage{amssymb}
\usepackage{amsthm}
\usepackage{nicefrac}
\usepackage{microtype}

\usepackage{graphicx}
\usepackage{natbib}
\usepackage{algorithm}
\usepackage{algpseudocode}
\usepackage{mathtools}
\usepackage{bbm}

\newtheorem{theorem}{Theorem}
\newtheorem{lemma}[theorem]{Lemma}
\newtheorem{proposition}[theorem]{Proposition}
\newtheorem{corollary}[theorem]{Corollary}
\newtheorem{assumption}{Assumption}
\newtheorem{definition}[theorem]{Definition}
\newtheorem{remark}[theorem]{Remark}

\newcommand{\MMD}{\mathrm{MMD}}
\newcommand{\RKHS}{\mathcal{H}}
\newcommand{\KL}{\mathrm{KL}}
\newcommand{\E}{\mathbb{E}}
\newcommand{\R}{\mathbb{R}}

\newcommand{\cX}{\mathcal{X}}
\newcommand{\cP}{\mathcal{P}}
\newcommand{\cF}{\mathcal{F}}
\newcommand{\cC}{\mathcal{C}}
\newcommand{\mmd}{\widehat{\mathrm{MMD}}}

\title{MMD-Balls as Credal Sets: A PAC-Bayesian Framework for Epistemic Uncertainty in Test-Time Adaptation}

\author{
  Ahanaf Hasan Ariq \\
  Ideal School and College \\
  \texttt{ariqahanaf@gmail.com} \\
}

\begin{document}
\maketitle

\begin{abstract}
Test-time adaptation (TTA) methods improve model performance under distribution shift but lack formal guarantees connecting shift magnitude to prediction reliability. We develop a PAC-Bayesian framework yielding generalization bounds explicitly parameterized by the maximum mean discrepancy (MMD) between source and target distributions. Our principal contribution is interpreting MMD-balls around the source distribution as credal sets in Walley's imprecise probability theory, yielding natural epistemic uncertainty quantification. We establish: (i) a PAC-Bayesian bound with an MMD-dependent shift penalty under an RKHS-Lipschitz loss assumption; (ii) a finite-sample version via MMD concentration; (iii) a uniform worst-case risk bound over all distributions in the credal set, with a lower-upper risk decomposition; and (iv) geodesic preservation bounds explaining why kernel-guided adaptation protects local feature geometry. The credal set interpretation separates epistemic from aleatoric uncertainty and provides a principled decision criterion for when adaptation is warranted.
\end{abstract}

\section{Introduction}

Reliable deployment of machine learning models requires reasoning under epistemic uncertainty---the ability to recognize when the operating distribution has shifted beyond the scope of what was encountered during training. This challenge is central to test-time adaptation (TTA), a paradigm in which a model pretrained on source distribution $P_s$ receives unlabeled data from a target distribution $P_t \neq P_s$ at deployment time. Existing TTA methods (Wang et al., 2021; Niu et al., 2023; Zhang et al., 2022a; Yuan et al., 2023; Su et al., 2022) improve accuracy under distribution shift by adapting model parameters using statistics computed from test batches, but they provide no formal guarantees about when predictions should be trusted or how much risk degrades as a function of shift magnitude.

This gap is particularly concerning in safety-critical applications such as autonomous driving, medical imaging, and financial risk assessment, where a model that silently degrades under distribution shift can cause significant harm. The inability to quantify how wrong a model's predictions might be in an unseen environment fundamentally limits its trustworthy deployment. While predictive uncertainty methods (e.g., Bayesian neural networks, ensemble methods) attempt to address this, they conflate aleatoric uncertainty (inherent data noise) with epistemic uncertainty (uncertainty due to limited knowledge of the data-generating process), and they do not provide formal connections between distributional shift and risk.

We formalize the core question: can we bound target risk as a function of distribution shift, and does this provide actionable epistemic uncertainty? We answer affirmatively via a PAC-Bayesian framework with explicit MMD-dependent generalization bounds. Our central insight is that the MMD-ball $\cC_\varepsilon(P_s) = \{Q : \MMD(P_s, Q) \leq \varepsilon\}$ defines a credal set (Walley, 1991; Troffaes and Destercke, 2023)---a set of probability distributions that are indistinguishable from the source distribution at resolution $\varepsilon$. This interpretation provides a principled foundation for epistemic uncertainty quantification in TTA that is grounded in both kernel methods and imprecise probability theory.

\textbf{Contributions:}
\begin{itemize}
  \item A PAC-Bayesian bound (Theorem~\ref{thm:pac_bayesian_mmd}) with MMD-dependent shift penalty, plus a finite-sample version (Theorem~\ref{thm:finite_sample}) with minimax-optimal concentration rate.
  \item A credal set interpretation (Proposition~\ref{prop:worst_case}) that yields worst-case risk guarantees over the entire MMD-ball, with a lower-upper risk decomposition (Corollary~\ref{cor:risk_imprecision}) that separates epistemic from aleatoric uncertainty.
  \item Geodesic preservation bounds (Proposition~\ref{prop:geodesic}) explaining why kernel-guided adaptation protects local feature geometry, with implications for rare-class robustness (Corollary~\ref{cor:rare_class}).
  \item A unified framework connecting PAC-Bayesian generalization, kernel mean embeddings, and Walley's imprecise probability theory for the first time.
\end{itemize}

\section{Related Work}

\textbf{Test-time adaptation.} The TTA paradigm has rapidly expanded since TENT (Wang et al., 2021) demonstrated that entropy minimization on test batch statistics can effectively adapt batch normalization parameters. Surveys (Zhang et al., 2022b) categorize subsequent methods into entropy-based approaches (EATA (Zhang et al., 2022a), which introduces entropy-aware selection), regularization-based approaches (SAR (Niu et al., 2023), which adds sharpness-aware regularization to prevent error accumulation), and memory-based approaches (MEMO (Zhang et al., 2022a), which uses augmentation memory banks to maintain source-like representations). Recent work has also explored contrastive learning at test time (Yuan et al., 2023) and sequential adaptation via anchored clustering (Su et al., 2022). However, all of these methods lack uncertainty quantification: they cannot signal when adaptation is unwarranted or when predictions are unreliable. Our theoretical framework fills this gap by providing formal bounds that explicitly depend on shift magnitude.

\textbf{Kernel methods and MMD.} Maximum mean discrepancy (MMD) (Gretton et al., 2012) measures distributional divergence by embedding distributions into a reproducing kernel Hilbert space (RKHS) and computing the distance between their kernel mean embeddings. Kernel mean embeddings (Muandet et al., 2017) provide a unified representation framework that enables nonparametric two-sample testing, density estimation, and distribution regression. Finite-sample concentration of MMD estimators has been precisely characterized: for kernels bounded in $[0,1]$, the unbiased estimator satisfies a sub-Gaussian concentration inequality (Sutherland et al., 2017), and the minimax estimation rate is $O(1/\sqrt{n})$ (Tolstikhin et al., 2017). These results are critical for our finite-sample analysis (Theorem~\ref{thm:finite_sample}).

\textbf{PAC-Bayesian theory.} PAC-Bayesian bounds (McAllester, 1999; Germain et al., 2016; Seeger, 2002; Catoni, 2007; Rivasplata et al., 2020; Alquier, 2024) provide data-dependent generalization guarantees by penalizing the complexity of the posterior relative to a prior, measured via the Kullback-Leibler divergence. These bounds hold uniformly over all possible posteriors, making them well-suited for adaptation scenarios where the posterior is chosen after observing data. Germain et al.\ (Germain et al., 2013) derived PAC-Bayesian bounds for domain adaptation using the $\mathcal{H}$-divergence (Ben-David et al., 2010) between domains. Our work differs in three key respects: (i) we use MMD, a computable kernel-based discrepancy, rather than the $\mathcal{H}$-divergence which is NP-hard to estimate; (ii) we provide a finite-sample version with explicit dependence on sample sizes; and (iii) we interpret the shift penalty through credal sets, connecting to imprecise probability.

\textbf{Imprecise probability and credal sets.} Walley's (Walley, 1991) behavioral theory of imprecise probabilities models epistemic uncertainty through sets of probability distributions (credal sets) rather than single distributions. The lower and upper probabilities induced by a credal set quantify the range of plausible beliefs given the available evidence. Credal classifiers (Destercke et al., 2008; Corani et al., 2022) extend this to classification by maintaining sets of probability measures over class labels. The formalization of lower-upper probability is developed in (Miranda and Zaffalon, 2022). H\"ullermeier \& Waegeman (H\"ullermeier and Waegeman, 2021) argued persuasively that meaningful uncertainty quantification in machine learning requires distinguishing aleatoric from epistemic sources---a distinction our framework naturally provides. We are, to our knowledge, the first to formulate TTA uncertainty through MMD-induced credal sets.

\section{Preliminaries}

\textbf{Reproducing kernel Hilbert space (RKHS) notation.}
Let $(\cX, \Sigma)$ be a measurable space and let $\RKHS$ be an RKHS of functions $f : \cX \to \R$ with a positive definite kernel $k : \cX \times \cX \to \R$. By the reproducing property, every function $f \in \RKHS$ satisfies $f(x) = \langle f, k(x, \cdot) \rangle_\RKHS$. The feature map $\phi : \cX \to \RKHS$ is defined as $\phi(x) = k(x, \cdot)$, so that $k(x, y) = \langle \phi(x), \phi(y) \rangle_\RKHS$.

For a probability measure $P$ on $\cX$ with $\int k(x, x) \, dP(x) < \infty$, the kernel mean embedding is defined as
\begin{equation}
  \mu_P = \E_{x \sim P}[\phi(x)] = \int \phi(x) \, dP(x) \in \RKHS. \label{eq:kme}
\end{equation}
When $k$ is characteristic (Muandet et al., 2017), the embedding $\mu_P$ uniquely determines $P$, and the map $P \mapsto \mu_P$ is injective. The maximum mean discrepancy (MMD) between two probability measures $P$ and $Q$ is then defined as the RKHS distance between their kernel mean embeddings:
\begin{equation}
  \MMD^2(P, Q) = \|\mu_P - \mu_Q\|_\RKHS^2. \label{eq:mmd}
\end{equation}

For a deep encoder $f_\theta : \cX \to \R^d$, we employ a learned kernel $k_\theta(x, y) = \exp\left(-\gamma \|f_\theta(x) - f_\theta(y)\|^2\right)$, an RBF kernel defined on the feature space. The feature map for this kernel is $\phi_\theta(x) = \exp\left(-\gamma \|f_\theta(x) - \cdot\|^2\right)/2$, viewed as a function in an RKHS over $\R^d$.

\textbf{Test-time adaptation protocol.} A model is pretrained on source distribution $P_s$; at deployment, it receives unlabeled batches drawn from target distribution $P_t \neq P_s$. We assume covariate shift: the conditional distribution of labels given features is preserved, $P_t(y \mid x) = P_s(y \mid x)$, while the marginal feature distribution changes, $P_t(x) \neq P_s(x)$. This assumption is standard in domain adaptation (Ben-David et al., 2010) and is reasonable when the semantic relationship between features and labels remains stable but the input distribution shifts (e.g., weather changes in autonomous driving, style shifts in medical imaging).

For a random predictor drawn from a posterior distribution $\rho$ over model parameters, the expected risk under distribution $P$ is defined as $R_P(\rho) = \E_{(x,y) \sim P, w \sim \rho}[\ell(w, x, y)]$, where $\ell(w, x, y)$ is a loss function (e.g., cross-entropy). Under covariate shift, this decomposes as
\begin{equation}
  R_P(\rho) = \E_{w \sim \rho}\left[\E_{x \sim P(x)}[L(w, x)]\right], \label{eq:risk_decomp}
\end{equation}
where $L(w, x) = \E_{y \sim P(y|x)}[\ell(w, x, y)]$ is the conditional expected loss. Crucially, because $P(y \mid x)$ is invariant under covariate shift, the function $x \mapsto L(w, x)$ is the same for $P_s$ and $P_t$; only the distribution over $x$ changes.

\textbf{PAC-Bayesian framework.} PAC-Bayesian analysis provides bounds on the generalization gap $R_P(\rho) - \hat{R}_P(\rho)$, where $\hat{R}_P(\rho) = \E_{w \sim \rho}[\hat{R}_P(w)]$ is the empirical risk averaged over the posterior. The key quantity governing the complexity penalty is the Kullback-Leibler divergence $\KL(\rho \| \pi) = \E_{w \sim \rho}[\log(\rho(w)/\pi(w))]$ between the posterior $\rho$ and a fixed prior $\pi$. The classical PAC-Bayesian theorem (McAllester, 1999; Germain et al., 2016) states that, for i.i.d.\ samples from $P$, w.p.\ $\geq 1 - \delta$:
\begin{equation}
  R_P(\rho) \leq \hat{R}_P(\rho) + \sqrt{\frac{\KL(\rho \| \pi) + \log(2\sqrt{n}/\delta)}{2n}}. \label{eq:pac_bayes_classical}
\end{equation}
Our contribution extends this framework by adding an MMD-dependent term that accounts for distribution shift between $P_s$ and $P_t$.

\section{PAC-Bayesian Bound with MMD}

\begin{assumption}[RKHS-Lipschitz Loss]
\label{ass:rkhs_lipschitz}
For every $w$ in the support of $\rho$, the conditional expected loss function $L(w, \cdot)$ belongs to the RKHS $\RKHS$ with bounded norm: $L(w, \cdot) \in \RKHS$ and $\|L(w, \cdot)\|_\RKHS \leq L_\RKHS$.
\end{assumption}

This assumption requires that the conditional expected loss, viewed as a function of the input $x$, lies in the RKHS induced by kernel $k$. This is a stronger requirement than mere smoothness---it constrains the functional form of $L(w, \cdot)$ to the Hilbert space spanned by the kernel functions. For cross-entropy loss with softmax outputs, informal support comes from the smoothness of the softmax function ($\|\nabla \sigma\|_{\mathrm{op}} \leq 1$) combined with the universality of the RBF kernel (Sriperumbudur et al., 2009), which can approximate any continuous function. We discuss relaxations and empirical verification strategies in Appendix~E.

\begin{theorem}[PAC-Bayesian Bound with MMD Shift Penalty]
\label{thm:pac_bayesian_mmd}
Under Assumption~\ref{ass:rkhs_lipschitz} and covariate shift, for prior $\pi$, posterior $\rho$, and failure probability $\delta \in (0, 1)$, with probability at least $1 - \delta$:
\begin{equation}
  R_{P_t}(\rho) \leq \hat{R}_{P_s}(\rho) + \sqrt{\frac{\KL(\rho \| \pi) + \log(2\sqrt{n}/\delta)}{2n}} + L_\RKHS \cdot \MMD(P_s, P_t). \label{eq:pac_bayesian_mmd}
\end{equation}
\end{theorem}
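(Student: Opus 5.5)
The plan is to split the target risk into a source generalization gap and a shift term:
\[
R_{P_t}(\rho) = R_{P_s}(\rho) + \bigl(R_{P_t}(\rho) - R_{P_s}(\rho)\bigr).
\]
The first summand is handled by the classical PAC-Bayesian theorem \eqref{eq:pac_bayes_classical}, applied to the $n$ i.i.d.\ labelled source samples from $P_s$. With probability at least $1-\delta$, simultaneously for all posteriors $\rho$,
\[
R_{P_s}(\rho) \le \hat{R}_{P_s}(\rho) + \sqrt{\frac{\KL(\rho\|\pi)+\log(2\sqrt{n}/\delta)}{2n}}.
\]
The second summand is deterministic: it involves no sample and only population quantities. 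So no union bound is needed, and the failure probability stays $\delta$.

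For the shift term I will use the covariate-shift decomposition \eqref{eq:risk_decomp}. The function $x\mapsto L(w,x)$ is the same under $P_s$ and $P_t$, so
\[
R_{P_t}(\rho)-R_{P_s}(\rho)=\E_{w\sim\rho}\bigl[\E_{P_t}L(w,x)-\E_{P_s}L(w,x)\bigr].
\]
By Assumption~\ref{ass:rkhs_lipschitz}, $L(w,\cdot)\in\RKHS$ for every $w$ in the support of $\rho$. The reproducing property then gives $\E_{x\sim P}L(w,x)=\langle L(w,\cdot),\mu_P\rangle_\RKHS$. This step moves the inner product inside the Bochner integral defining $\mu_P$ in \eqref{eq:kme}, which is legitimate provided $\int\sqrt{k(x,x)}\,dP<\infty$; for the bounded RBF kernel $k_\theta$ this always holds. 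Cauchy--Schwarz then yields
\[
\bigl|\langle L(w,\cdot),\mu_{P_t}-\mu_{P_s}\rangle_\RKHS\bigr| \le \|L(w,\cdot)\|_\RKHS\,\|\mu_{P_t}-\mu_{P_s}\|_\RKHS \le L_\RKHS\,\MMD(P_s,P_t),
\]
using \eqref{eq:mmd}. Averaging over $w\sim\rho$ preserves the bound because it is uniform in $w$. Adding this to the PAC-Bayes inequality gives \eqref{eq:pac_bayesian_mmd}.

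The main obstacle is bookkeeping rather than analysis. First, the conditional loss must be interpreted under $P_s(y\mid x)=P_t(y\mid x)$, so that one RKHS function serves both domains. Second, the exchange of expectation and inner product needs justification. Third, the PAC-Bayes bound applies to $\ell$ bounded in $[0,1]$, so I will state that the loss is bounded, or rescaled, as required by \eqref{eq:pac_bayes_classical}. Finally, Assumption~\ref{ass:rkhs_lipschitz} is phrased for $w$ in the support of the fixed $\rho$. If the bound is to hold uniformly over data-dependent posteriors, I will read the assumption as holding for all $w$ in the support of every posterior under consideration, for example the support of $\pi$. Since the MMD step is pointwise in $w$, this reading suffices.
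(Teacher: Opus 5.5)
Your proposal is correct and follows the same route as the paper's proof: the classical PAC-Bayes bound on the source risk, plus a sample-free bound $|R_{P_t}(\rho)-R_{P_s}(\rho)|\le L_\RKHS\,\MMD(P_s,P_t)$ obtained from the reproducing property and Cauchy--Schwarz. Your one departure improves on the paper. You note that the shift step carries no failure probability, so \eqref{eq:pac_bayes_classical} can be applied at level $\delta$ and gives the stated $\log(2\sqrt{n}/\delta)$ directly. The paper instead spends $\delta/2$ on an unnecessary union bound, obtains $\log(4\sqrt{n}/\delta)$, and recovers the stated constant only by a loose appeal to a ``tighter analysis'' of Germain et al.
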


\textit{Proof sketch.} The proof proceeds in three steps. \textbf{Step 1:} Apply the classical PAC-Bayesian theorem (Eq.~\ref{eq:pac_bayes_classical}) to bound the source risk $R_{P_s}(\rho)$ in terms of the empirical source risk $\hat{R}_{P_s}(\rho)$ plus a KL-divergence complexity term, w.p.\ $\geq 1 - \delta/2$. \textbf{Step 2:} Under Assumption~\ref{ass:rkhs_lipschitz}, bound the gap $|R_{P_t}(\rho) - R_{P_s}(\rho)|$ by $L_\RKHS \cdot \MMD(P_s, P_t)$ using the reproducing property and Cauchy-Schwarz inequality. \textbf{Step 3:} Combine via a union bound. The full proof is deferred to Appendix~A.

\begin{remark}[Interpretation of the bound]
\label{rem:bound_interpretation}
The bound (Eq.~\ref{eq:pac_bayesian_mmd}) decomposes target risk into three interpretable components: (i) \emph{Aleatoric uncertainty}: $\hat{R}_{P_s}(\rho)$ captures the inherent difficulty of the learning task, estimated from source data. This term persists even when the model is perfectly adapted and the distribution has not shifted. (ii) \emph{PAC-Bayesian complexity}: $\sqrt{\KL(\rho \| \pi)/2n}$ penalizes the complexity of the adaptation (measured by how far the posterior deviates from the prior), scaled by the number of source samples. This term captures estimation uncertainty. (iii) \emph{Epistemic shift penalty}: $L_\RKHS \cdot \MMD(P_s, P_t)$ grows linearly with the distribution shift, quantifying precisely how much target risk can exceed source risk due to operating outside the training distribution. The shift penalty is computable (given the kernel) and monotone in the shift magnitude, providing a natural epistemic uncertainty measure.
\end{remark}

An immediate consequence is that when the shift is zero ($\MMD(P_s, P_t) = 0$), the bound recovers the standard PAC-Bayesian guarantee. As the shift grows, the bound degrades gracefully---it does not collapse but widens linearly, reflecting increasing epistemic uncertainty about the target distribution.

\section{Finite-Sample Analysis}

Theorem~\ref{thm:pac_bayesian_mmd} involves the population MMD, which is unavailable in practice. We now provide a fully computable version using the unbiased MMD estimator.

\begin{theorem}[Finite-Sample PAC-Bayesian Bound]
\label{thm:finite_sample}
Under Assumption~\ref{ass:rkhs_lipschitz}, let $\mmd_u$ denote the unbiased MMD estimator computed from $m$ source samples and $n$ target samples, with kernel $k_\theta$ bounded in $[0, 1]$. For $\delta \in (0, 1/2)$, with probability at least $1 - \delta$:
\begin{equation}
  R_{P_t}(\rho) \leq \hat{R}_{P_s}(\rho) + \sqrt{\frac{\KL(\rho \| \pi) + \log(4\sqrt{n}/\delta)}{2n}} + L_\RKHS \cdot \left(\mmd_u + \varepsilon_{m,n}(\delta/2)\right), \label{eq:finite_sample}
\end{equation}
where $\varepsilon_{m,n}(\alpha) = \sqrt{2\log(2/\alpha) / \min(m, n)}$.
\end{theorem}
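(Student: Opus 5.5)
The plan is to reduce Theorem~\ref{thm:finite_sample} to Theorem~\ref{thm:pac_bayesian_mmd}. Population MMD is replaced by the unbiased estimate plus a high-probability deviation term, and the two failure events are combined with a union bound, each receiving budget $\delta/2$.

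\textbf{Step 1 (PAC-Bayes at level $\delta/2$).} Invoke the classical PAC-Bayesian inequality (Eq.~\ref{eq:pac_bayes_classical}) on the source sample with confidence $\delta/2$. With probability at least $1-\delta/2$, simultaneously for all posteriors $\rho$,
\[
R_{P_s}(\rho) \le \hat{R}_{P_s}(\rho) + \sqrt{\frac{\KL(\rho\|\pi) + \log(4\sqrt{n}/\delta)}{2n}} .
\]
This accounts for the $\log(4\sqrt n/\delta)$ in place of $\log(2\sqrt n/\delta)$.

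\textbf{Step 2 (deterministic shift step).} Reuse Step~2 of the proof of Theorem~\ref{thm:pac_bayesian_mmd}, which involves no randomness. Under covariate shift, $R_{P_t}(\rho)-R_{P_s}(\rho) = \E_{w\sim\rho}\langle L(w,\cdot), \mu_{P_t}-\mu_{P_s}\rangle_\RKHS$. Cauchy--Schwarz together with Assumption~\ref{ass:rkhs_lipschitz} then gives $R_{P_t}(\rho) \le R_{P_s}(\rho) + L_\RKHS \MMD(P_s,P_t)$.

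\textbf{Step 3 (MMD concentration at level $\delta/2$).} Show that with probability at least $1-\delta/2$,
\[
\MMD(P_s,P_t) \le \mmd_u + \varepsilon_{m,n}(\delta/2).
\]
The route I would take is to bound $|\mmd_b - \MMD|$, where $\mmd_b = \|\hat\mu_{P_s}-\hat\mu_{P_t}\|_\RKHS$ is the biased (V-statistic) estimator. Two ingredients give this.
\begin{itemize}
\item By the triangle inequality, $|\mmd_b-\MMD|\le \|\hat\mu_{P_s}-\mu_{P_s}\|+\|\hat\mu_{P_t}-\mu_{P_t}\|$.
\item Since $\|\phi(x)\|^2=k(x,x)\le 1$, each empirical mean-embedding error has expectation at most $1/\sqrt{m}$ (resp.\ $1/\sqrt n$). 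McDiarmid's inequality applies with bounded differences $2/m$ and $2/n$; this is Gretton et al.'s Theorem~7 route, matching the sub-Gaussian concentration cited from Sutherland et al.\ and Tolstikhin et al.
\end{itemize}
Bounding $1/m,1/n$ by $1/\min(m,n)$ gives a deviation of order $\sqrt{\log(2/\alpha)/\min(m,n)}$.

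\textbf{Main obstacle.} The hard part is Step~3, specifically the constants and the passage from the biased to the unbiased estimator. First, $\mmd_u^2$ can be negative, so $\mmd_u$ should be read as $\sqrt{\max(\mmd_u^2,0)}$. Second, the squared statistic must be converted into a bound on MMD itself.

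What I would try first is a direct McDiarmid argument on $\mmd_u^2$. Its bounded differences are $O(1/m)$ and $O(1/n)$, which gives $|\mmd_u^2-\MMD^2|\le t$ with $t \asymp \sqrt{\log(2/\alpha)/\min(m,n)}$. I would then use $\MMD \le \sqrt{\mmd_u^2+t}\le \mmd_u+\sqrt t$. That yields only a $\min(m,n)^{-1/4}$ rate, however.

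To get the stated $\min(m,n)^{-1/2}$ rate, I would fall back on the non-squared route above. It gives $\MMD\le \mmd_b + 2/\sqrt{\min(m,n)} + \sqrt{2\log(2/\alpha)/\min(m,n)}$-type bounds, and $\mmd_b^2-\mmd_u^2 = O(1/\min(m,n))$ contributes an extra $O(1/\sqrt{\min(m,n)})$. If matching the exact constant $\sqrt{2\log(2/\alpha)/\min(m,n)}$ fails, I would absorb these lower-order expectation terms by citing the cited concentration result as stated. That tightness of constants is where the argument is most fragile.

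\textbf{Step 4 (union bound).} Chain Steps 2, 1 and 3 on the intersection of the two good events, which has probability at least $1-\delta$, and multiply Step~3 by $L_\RKHS\ge 0$. This gives Eq.~\ref{eq:finite_sample}. The restriction $\delta<1/2$ merely keeps both logarithms positive and meaningful.
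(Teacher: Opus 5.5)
Your four steps are the paper's proof. Step 1 is its event $E_1$, Step 2 is Step 2 of Theorem~\ref{thm:pac_bayesian_mmd}, Step 3 is its event $E_2$, and the union bound is the same. Applying PAC-Bayes directly at level $\delta/2$, as you do, is cleaner than the paper's substitution into the level-$\delta$ Theorem~\ref{thm:pac_bayesian_mmd} followed by ``absorbing'' a $\log 2$. The paper closes Step 3 exactly as your fallback does: it asserts $\Pr[|\mmd_u-\MMD|>\varepsilon]\le 2\exp(-\min(m,n)\varepsilon^2/2)$, citing Sutherland et al.\ and Tolstikhin et al.\ and remarking that the bias is $O(1/m+1/n)$. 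The weakness you found is real and is not a matter of constants: that inequality is false, so citing it does not close the step. Here is a counterexample. Take $\cX=\{a,b\}$ with $k(a,a)=k(b,b)=1$ and $k(a,b)=0$; this is the limit of $k_\theta$ as $f_\theta(a)$ and $f_\theta(b)$ move apart. Let $P_s(\{a\})=p$, $P_t(\{a\})=q$, $m=n$, and use your convention $\mmd_u=\sqrt{\max(\mmd_u^2,0)}$. A direct computation gives $\MMD=\sqrt2\,|p-q|$ and
\[
\mmd_u^2=2(\hat p-\hat q)^2-\frac{2\hat p(1-\hat p)}{m-1}-\frac{2\hat q(1-\hat q)}{n-1}.
\]
Let $n\to\infty$ with $p,q\to 1/2$ and $\sqrt n\,\MMD=\alpha$ fixed. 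Then $n\,\mmd_u^2\Rightarrow(\alpha+Z)^2-1$ with $Z\sim N(0,1)$, so $\Pr[\mmd_u=0]\to\Phi(1-\alpha)-\Phi(-1-\alpha)$, where $\Phi$ is the standard normal CDF. On that event, $\MMD-\mmd_u=\alpha/\sqrt n$. Now take $\delta=0.01$ and $\alpha$ slightly above $\sqrt{2\log(4/\delta)}=\sqrt n\,\varepsilon_{m,n}(\delta/2)\approx 3.46$. The failure event $\MMD>\mmd_u+\varepsilon_{m,n}(\delta/2)$ contains $\{\mmd_u=0\}$, whose probability tends to about $\Phi(-2.46)\approx 0.0069>\delta/2$.

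The non-squared route you sketched does work, provided you keep the expectation terms instead of absorbing them. Set $Z=\|(\hat\mu_{P_s}-\mu_{P_s})-(\hat\mu_{P_t}-\mu_{P_t})\|_\RKHS$. Then $\MMD\le\mmd_b+Z$ and $\E Z\le\sqrt{1/m+1/n}$. Since $\|\phi(x)-\phi(x')\|_\RKHS^2\le 2$ for $k\in[0,1]$, the bounded differences are $\sqrt2/m$ and $\sqrt2/n$. McDiarmid then gives $Z\le\sqrt{1/m+1/n}\,(1+\sqrt{\log(2/\delta)})$ with probability at least $1-\delta/2$. The condition $k\in[0,1]$ also gives $\mmd_b^2\le\mmd_u^2+1/m+1/n$, hence $\mmd_b\le\mmd_u+\sqrt{1/m+1/n}$. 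So with probability at least $1-\delta/2$, $\MMD\le\mmd_u+\sqrt{1/m+1/n}\,\bigl(2+\sqrt{\log(2/\delta)}\bigr)$. Put this in place of $\varepsilon_{m,n}(\delta/2)$ and your Steps 1, 2 and 4 go through unchanged. You get a correct bound of the same $\min(m,n)^{-1/2}$ order, but not the statement with the paper's $\varepsilon_{m,n}(\delta/2)$. One more issue in Step 1: if $\hat R_{P_s}(\rho)$ is computed on the $m$ source points, the PAC-Bayes term you obtain involves $m$, not $n$. The $n$ in the statement is justified only for a separate labelled source sample of size $n$ (or $m=n$). The paper carries the same conflation over from Theorem~\ref{thm:pac_bayesian_mmd}, where $n$ was the source sample size.
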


\textit{Proof sketch.} We apply a union bound over two events. \textbf{Event 1:} The PAC-Bayesian concentration holds w.p.\ $\geq 1 - \delta/2$. \textbf{Event 2:} For the unbiased MMD estimator with kernel bounded in $[0, 1]$, the concentration result of Sutherland et al.\ (Sutherland et al., 2017) and Tolstikhin et al.\ (Tolstikhin et al., 2017) gives $\Pr[|\mmd_u - \MMD| > \varepsilon] \leq 2\exp(-\min(m, n) \cdot \varepsilon^2 / 2)$. Setting the RHS equal to $\delta/2$ yields $\varepsilon_{m,n} = \sqrt{2\log(4/\delta) / \min(m, n)}$. Substituting $\MMD \leq \mmd_u + \varepsilon_{m,n}$ into Eq.~\ref{eq:pac_bayesian_mmd} yields Eq.~\ref{eq:finite_sample}. Full proof: Appendix~B.

\begin{remark}[Computability and tightness]
\label{rem:computability}
Every term in Eq.~\ref{eq:finite_sample} is computable from available data. The empirical risk $\hat{R}_{P_s}(\rho)$ and KL divergence are standard. The unbiased MMD estimator is computed in $O((m + n)^2)$ time from source and target features. The concentration width $\varepsilon_{m,n} \propto 1/\sqrt{\min(m, n)}$ achieves the minimax-optimal rate for kernel mean embedding estimation (Tolstikhin et al., 2017). In practice, the bound tightens as more target samples become available during deployment.
\end{remark}

\section{MMD-Balls as Credal Sets}

We now develop the central theoretical contribution: interpreting MMD-balls as credal sets in Walley's imprecise probability framework, which provides a principled foundation for epistemic uncertainty quantification in TTA.

\begin{definition}[MMD-Induced Credal Set]
\label{def:credal_set}
For a source distribution $P_s$ and a radius $\varepsilon > 0$, the MMD-induced credal set is $\cC_\varepsilon(P_s) = \{Q \in \cP(\cX) : \MMD(P_s, Q) \leq \varepsilon\}$, where $\cP(\cX)$ denotes the set of all probability measures on $\cX$.
\end{definition}

The credal set $\cC_\varepsilon(P_s)$ contains all distributions that are ``close enough'' to the source, where closeness is measured in the RKHS metric induced by $k_\theta$. As $\varepsilon$ increases, the credal set widens, representing greater epistemic uncertainty about the true target distribution. When $\varepsilon = 0$, the credal set collapses to $\{P_s\}$ (assuming a characteristic kernel), and we have complete knowledge. This interpretation directly connects to Walley's behavioral theory: a credal set represents the set of probability measures that are consistent with the available evidence (Walley, 1991).

\begin{lemma}[Convexity and Weak Closure]
\label{lem:convexity}
For a characteristic kernel $k$ and any $\varepsilon > 0$, the MMD-induced credal set $\cC_\varepsilon(P_s)$ is convex and weakly closed.
\end{lemma}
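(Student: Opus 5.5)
Convexity follows from the linearity of the mean embedding together with the convexity of the norm, and weak closure follows from the continuity of $Q \mapsto \MMD(P_s,Q)$ under weak convergence when the kernel is bounded and continuous. Throughout I assume $k$ is bounded and continuous on $\cX$, so that every $Q \in \cP(\cX)$ has an embedding $\mu_Q$; this holds for $k_\theta$, which takes values in $[0,1]$ and is continuous whenever $f_\theta$ is.

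\textbf{Convexity.} Take $Q_1, Q_2 \in \cC_\varepsilon(P_s)$ and $\lambda \in [0,1]$, and set $Q_\lambda = \lambda Q_1 + (1-\lambda) Q_2$. Since $\mu_Q = \int \phi\, dQ$ is linear in $Q$ (the Bochner integral is linear in the measure), we have $\mu_{Q_\lambda} = \lambda \mu_{Q_1} + (1-\lambda)\mu_{Q_2}$. Writing $\mu_{P_s} = \lambda \mu_{P_s} + (1-\lambda)\mu_{P_s}$ and applying the triangle inequality in $\RKHS$ gives
\[
\MMD(P_s, Q_\lambda) \le \lambda\,\MMD(P_s,Q_1) + (1-\lambda)\,\MMD(P_s,Q_2) \le \varepsilon .
\]
Hence $Q_\lambda \in \cC_\varepsilon(P_s)$.

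\textbf{Weak closure.} Let $Q_j \in \cC_\varepsilon(P_s)$ converge weakly to $Q$. Expanding \eqref{eq:mmd} gives
\[
\MMD^2(P_s,Q_j) = \iint k\,dP_s\,dP_s - 2\iint k\,dP_s\,dQ_j + \iint k\,dQ_j\,dQ_j .
\]
I will show that each term converges to the corresponding term with $Q$ in place of $Q_j$. For the cross term, the function $y \mapsto \int k(x,y)\,dP_s(x)$ is bounded, and it is continuous by dominated convergence, so weak convergence applies directly. For the quadratic term, weak convergence $Q_j \Rightarrow Q$ implies $Q_j\otimes Q_j \Rightarrow Q\otimes Q$ on $\cX\times\cX$ (a standard fact for product measures on separable metric spaces). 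Since $k$ is bounded and continuous on $\cX \times \cX$, the integrals converge. Therefore $\MMD(P_s,Q) = \lim_j \MMD(P_s,Q_j) \le \varepsilon$, and $\cC_\varepsilon(P_s)$ is weakly closed.

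The main obstacle is this second step. It requires the product-measure weak convergence and needs explicit topological hypotheses: $\cX$ a separable metric space, and $k$ bounded and continuous. The lemma's statement hides these hypotheses, so I will state them explicitly.

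The characteristic property is not needed for either conclusion. It only guarantees that $\MMD$ is a genuine metric, so that $\cC_0(P_s) = \{P_s\}$, as remarked after Definition~\ref{def:credal_set}. I will note this in the proof.
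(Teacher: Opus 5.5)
Your convexity argument is the same as the paper's (linearity of $Q\mapsto\mu_Q$ plus the triangle inequality in $\RKHS$). For weak closure, however, you take a different route.

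The paper stays inside $\RKHS$. It shows $\langle \mu_{Q_n},\phi(z)\rangle_\RKHS=\int k(x,z)\,dQ_n(x)\to\langle\mu_Q,\phi(z)\rangle_\RKHS$ for each fixed $z$. It then appeals to density of $\mathrm{span}\{\phi(z)\}$ to claim $\mu_{Q_n}\to\mu_Q$ in norm, and concludes by continuity of the norm.

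You instead expand $\MMD^2(P_s,Q_j)$ into three scalar kernel integrals and pass to the limit in each one. The cross term uses a bounded continuous test function. The quadratic term uses weak convergence of $Q_j\otimes Q_j$ on $\cX\times\cX$.

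Your route costs an explicit separability hypothesis, needed for the product-measure fact. In return it proves that $Q\mapsto\MMD(P_s,Q)$ is genuinely weakly continuous. The same computation applied to $\MMD^2(Q_j,Q)$ gives exactly the norm convergence $\mu_{Q_j}\to\mu_Q$ that the paper asserts.

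The paper's density step, as written, does not deliver that norm convergence. Convergence against a set with dense span, together with the uniform bound $\|\mu_{Q_n}\|_\RKHS\le\sup_x\sqrt{k(x,x)}$, only yields \emph{weak} convergence in $\RKHS$. That is still enough for closedness, via weak lower semicontinuity of the norm: $\|\mu_{P_s}-\mu_Q\|_\RKHS\le\liminf_n\|\mu_{P_s}-\mu_{Q_n}\|_\RKHS\le\varepsilon$.

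Repaired this way, the Hilbert-space route avoids product measures and separability entirely. The price is that it gives only lower semicontinuity of $\MMD(P_s,\cdot)$ rather than continuity.

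Your observation that the characteristic property is never used is correct; the paper's proof does not invoke it either.
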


\textit{Proof sketch.} The proof relies on the linearity of the kernel mean embedding map $Q \mapsto \mu_Q$ and the convexity of the norm. Convexity ensures that mixtures of plausible distributions remain plausible, which is essential for coherent decision-making under imprecise probabilities (Walley, 1991). Full proof: Appendix~C.

\subsection{Uniform Risk Bound over the Credal Set}

\begin{proposition}[Worst-Case Risk over the Credal Set]
\label{prop:worst_case}
Under Assumption~\ref{ass:rkhs_lipschitz} and covariate shift, for any $\varepsilon > 0$ and w.p.\ $\geq 1 - \delta$:
\begin{equation}
  \sup_{Q \in \cC_\varepsilon(P_s)} R_Q(\rho) \leq \hat{R}_{P_s}(\rho) + \sqrt{\frac{\KL(\rho \| \pi) + \log(2\sqrt{n}/\delta)}{2n}} + L_\RKHS \cdot \varepsilon. \label{eq:worst_case}
\end{equation}
\end{proposition}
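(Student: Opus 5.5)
The plan is to reuse the two ingredients of Theorem~\ref{thm:pac_bayesian_mmd}, with one structural change. The single high-probability event will depend only on the source sample, not on any particular target distribution. The supremum over $\cC_\varepsilon(P_s)$ is then handled deterministically on that event, so no union bound over the (uncountable) credal set is needed.

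\textbf{Step 1: source concentration.} Apply the classical PAC-Bayesian theorem (Eq.~\ref{eq:pac_bayes_classical}) to the i.i.d.\ source sample. This gives, with probability at least $1-\delta$, simultaneously for all posteriors $\rho$,
\[
R_{P_s}(\rho) \leq \hat{R}_{P_s}(\rho) + \sqrt{\frac{\KL(\rho\|\pi)+\log(2\sqrt{n}/\delta)}{2n}}.
\]
This event involves only $P_s$ and the sample, and therefore does not depend on $Q$. The full budget $\delta$ goes to this step, since the remaining step is deterministic. This is why the confidence term matches that of Theorem~\ref{thm:pac_bayesian_mmd}.

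\textbf{Step 2: deterministic shift bound, uniform in $Q$.} Fix any $Q\in\cC_\varepsilon(P_s)$, where covariate shift means $Q(y\mid x)=P_s(y\mid x)$, so that Eq.~\ref{eq:risk_decomp} applies with the same $L(w,\cdot)$. By Assumption~\ref{ass:rkhs_lipschitz} and the reproducing property,
\[
\E_{x\sim Q}[L(w,x)]-\E_{x\sim P_s}[L(w,x)] = \langle L(w,\cdot),\mu_Q-\mu_{P_s}\rangle_\RKHS .
\]
Interchanging the inner product and the Bochner integral requires $\int\sqrt{k(x,x)}\,dQ<\infty$, which holds for the bounded kernel $k_\theta$. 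Cauchy--Schwarz then bounds the difference by $L_\RKHS\,\MMD(P_s,Q)\le L_\RKHS\,\varepsilon$. Averaging over $w\sim\rho$ (Fubini, with $L$ bounded) gives $R_Q(\rho)\le R_{P_s}(\rho)+L_\RKHS\varepsilon$ for every $Q$ in the ball.

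\textbf{Step 3: combine.} On the event of Step 1, add the two inequalities and take the supremum over $Q$. The right-hand side does not depend on $Q$, which yields Eq.~\ref{eq:worst_case}.

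The main obstacle is making the uniformity legitimate. Two points need care. First, the shift bound must be a pointwise (deterministic) inequality and not a probabilistic one; this is handled by the Hilbert-space argument in Step 2. Second, the credal set must be restricted to distributions that share the source conditional $P_s(y\mid x)$, which is the covariate-shift hypothesis. Without that restriction, $R_Q$ is not even determined by the $x$-marginal. I would state explicitly that $\cC_\varepsilon(P_s)$ is read as a set of $x$-marginals, each paired with the common conditional. I would also note that the supremum is of a bounded family, so it is well defined, even though it may not be attained without using the weak closure from Lemma~\ref{lem:convexity}.
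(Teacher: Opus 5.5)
Your proof is correct and follows the paper's argument: the source PAC-Bayes bound together with the deterministic MMD transfer $R_Q(\rho)\le R_{P_s}(\rho)+L_{\RKHS}\,\MMD(P_s,Q)\le R_{P_s}(\rho)+L_{\RKHS}\,\varepsilon$, followed by a supremum over $Q$ of a right-hand side that does not depend on $Q$. You also point out that the only random event is the $Q$-independent source-sample event, so the full $\delta$ goes to it and no union over the uncountable credal set is needed; this makes rigorous what the paper leaves implicit when it invokes Theorem~\ref{thm:pac_bayesian_mmd} ``pointwise for each $Q$'' and then takes the supremum.
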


\textit{Proof sketch.} Every distribution $Q \in \cC_\varepsilon(P_s)$ satisfies $\MMD(P_s, Q) \leq \varepsilon$ by definition. Applying Theorem~\ref{thm:pac_bayesian_mmd} pointwise for each such $Q$, the right-hand side of the bound is independent of the specific $Q$---it depends only on $\varepsilon$. Taking the supremum over $Q$ preserves the bound.

Proposition~\ref{prop:worst_case} provides a worst-case risk guarantee: even the most adversarial distribution within the MMD-ball has bounded risk. This is significantly stronger than a point estimate of target risk, as it certifies that the model's performance cannot degrade beyond the stated bound regardless of which distribution in the credal set is the true target. This directly operationalizes Walley's (Walley, 1991) behavioral interpretation: an agent who knows only that $P_t \in \cC_\varepsilon(P_s)$ can guarantee that the upper probability $\overline{R}_\varepsilon(\rho)$ serves as a valid upper bound on actual risk.

\subsection{Lower-Upper Risk Decomposition}

\begin{definition}[Lower and Upper Risk]
\label{def:lower_upper}
For the credal set $\cC_\varepsilon(P_s)$, define:
\begin{equation}
  \underline{R}_\varepsilon(\rho) = \inf_{Q \in \cC_\varepsilon(P_s)} R_Q(\rho), \qquad \overline{R}_\varepsilon(\rho) = \sup_{Q \in \cC_\varepsilon(P_s)} R_Q(\rho). \label{eq:lower_upper}
\end{equation}
\end{definition}

\begin{corollary}[Risk Imprecision Interval]
\label{cor:risk_imprecision}
Under the conditions of Proposition~\ref{prop:worst_case}, w.p.\ $\geq 1 - \delta$:
\begin{equation}
  \underline{R}_\varepsilon(\rho) \geq \hat{R}_{P_s}(\rho) - \sqrt{\frac{\KL(\rho \| \pi) + \log(2\sqrt{n}/\delta)}{2n}} - L_\RKHS \cdot \varepsilon, \label{eq:lower_bound}
\end{equation}
and the imprecision width is bounded by
\begin{equation}
  \overline{R}_\varepsilon(\rho) - \underline{R}_\varepsilon(\rho) \leq 2\sqrt{\frac{\KL(\rho \| \pi) + \log(2\sqrt{n}/\delta)}{2n}} + 2L_\RKHS \cdot \varepsilon. \label{eq:imprecision_width}
\end{equation}
\end{corollary}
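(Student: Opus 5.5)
The plan is to mirror the proof of Proposition~\ref{prop:worst_case}, but running every inequality in the opposite direction. Three ingredients are needed.

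\textbf{Two-sided PAC-Bayesian step.} Equation~\eqref{eq:pac_bayes_classical} only states an upper deviation. I would instead use the two-sided form of the classical theorem, which comes from the same Maurer--Germain $\KL$-based argument applied to $|R_{P_s}(\rho)-\hat R_{P_s}(\rho)|$. With
\[
B := \sqrt{\frac{\KL(\rho \| \pi) + \log(2\sqrt{n}/\delta)}{2n}},
\]
this gives, with probability at least $1-\delta$ and simultaneously for all $\rho$,
\[
|R_{P_s}(\rho)-\hat R_{P_s}(\rho)|\le B .
\]
The $\log(2\sqrt n/\delta)$ constant in Eq.~\eqref{eq:pac_bayes_classical} is exactly the one that appears in the two-sided (Maurer) version, since the factor $2$ already pays for both tails. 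So I expect no additional union-bound cost. If one insists on deriving it from two one-sided bounds, the constant becomes $\log(4\sqrt n/\delta)$, and I would note this as the fallback.

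\textbf{Deterministic shift step.} This step is the Step~2 of Theorem~\ref{thm:pac_bayesian_mmd}, applied symmetrically. Fix any $Q\in\cC_\varepsilon(P_s)$. By covariate shift and Eq.~\eqref{eq:risk_decomp}, the reproducing property gives
\[
R_Q(\rho)-R_{P_s}(\rho)=\E_{w\sim\rho}\langle L(w,\cdot),\mu_Q-\mu_{P_s}\rangle_\RKHS .
\]
Cauchy--Schwarz together with Assumption~\ref{ass:rkhs_lipschitz} then yields
\[
|R_Q(\rho)-R_{P_s}(\rho)|\le L_\RKHS\,\MMD(P_s,Q)\le L_\RKHS\varepsilon .
\]
The same $L(w,\cdot)$ serves every $Q$ because the conditional $P(y\mid x)$ is held fixed across the credal set. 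This is implicit in applying covariate shift to all members of $\cC_\varepsilon(P_s)$, and I would state it explicitly.

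\textbf{Combining.} On the single high-probability event above, for every $Q\in\cC_\varepsilon(P_s)$ we have
\[
R_Q(\rho)\ge R_{P_s}(\rho)-L_\RKHS\varepsilon\ge \hat R_{P_s}(\rho)-B-L_\RKHS\varepsilon .
\]
The right-hand side does not depend on $Q$, so taking the infimum gives Eq.~\eqref{eq:lower_bound}. On the same event, Proposition~\ref{prop:worst_case} gives
\[
\overline R_\varepsilon(\rho)\le \hat R_{P_s}(\rho)+B+L_\RKHS\varepsilon .
\]
Subtracting the lower bound cancels $\hat R_{P_s}(\rho)$ and yields Eq.~\eqref{eq:imprecision_width}.

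A sharper and fully deterministic route to the width is also available. For any $Q,Q'$ in the ball,
\[
|R_Q-R_{Q'}|\le L_\RKHS\|\mu_Q-\mu_{Q'}\|_\RKHS\le 2L_\RKHS\varepsilon
\]
by the triangle inequality, and $2L_\RKHS\varepsilon$ is dominated by the stated bound. I would mention this to show that the $2B$ term is slack.

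The main obstacle is ensuring that both directions hold on one common event at total confidence $1-\delta$, i.e., justifying the two-sided form with the unchanged constant.
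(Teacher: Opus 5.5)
Your proof is correct. Its main line is the paper's: a lower-deviation PAC-Bayes bound on $R_{P_s}(\rho)$, the Step~2 transfer $R_Q(\rho)\ge R_{P_s}(\rho)-L_\RKHS\varepsilon$ applied uniformly over $\cC_\varepsilon(P_s)$, an infimum, and subtraction from Proposition~\ref{prop:worst_case}. You differ from the paper in two useful ways.

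First, the paper cites a separate Germain et al.\ lower bound at level $1-\delta$. It then subtracts it from an upper bound that holds on a different $1-\delta$ event, so as written the width inequality is only justified with probability $\ge 1-2\delta$. Your single two-sided event closes this without changing the constant. Your stated reason is slightly off, though. The $2$ in $2\sqrt n$ is Maurer's estimate $\E[e^{n\,\mathrm{kl}(\hat R\|R)}]\le 2\sqrt n$, not a two-tail union bound. The bound is two-sided because Maurer's theorem controls $\mathrm{kl}(\hat R_{P_s}(\rho)\,\|\,R_{P_s}(\rho))$, which penalizes deviations in both directions. Pinsker's inequality $\mathrm{kl}(q\|p)\ge 2(q-p)^2$ then gives $|R_{P_s}(\rho)-\hat R_{P_s}(\rho)|\le B$ on one event.

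Second, your deterministic bound $\overline R_\varepsilon(\rho)-\underline R_\varepsilon(\rho)\le 2L_\RKHS\varepsilon$ is strictly stronger than what the paper proves. It holds for every $\rho$ and every sample, with no confidence level, and it shows the $2B$ term in Eq.~\eqref{eq:imprecision_width} is unnecessary. The probabilistic step is needed only to locate the interval, as in Eq.~\eqref{eq:lower_bound} and Proposition~\ref{prop:worst_case}. The paper's subtraction in effect bounds the width of the certified interval $[\hat R_{P_s}(\rho)-B-L_\RKHS\varepsilon,\ \hat R_{P_s}(\rho)+B+L_\RKHS\varepsilon]$ that contains both $\underline R_\varepsilon$ and $\overline R_\varepsilon$. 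Your remark shows that the spread of risks across the credal set itself depends on the shift alone.
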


\textit{Proof sketch.} The upper bound $\overline{R}_\varepsilon(\rho)$ follows from Proposition~\ref{prop:worst_case}. The lower bound $\underline{R}_\varepsilon(\rho)$ uses the PAC-Bayesian lower bound of Germain et al.\ (Germain et al., 2016) plus the observation that $|R_Q(\rho) - R_{P_s}(\rho)| \leq L_\RKHS \cdot \MMD(P_s, Q)$ for any $Q \in \cC_\varepsilon(P_s)$. The width (Eq.~\ref{eq:imprecision_width}) follows by subtraction. Full proof: Appendix~C.

\subsection{Implications for Test-Time Adaptation}

The interval $[\underline{R}_\varepsilon(\rho), \overline{R}_\varepsilon(\rho)]$ provides a principled epistemic uncertainty measure for TTA. We highlight three concrete implications:

\textbf{Epistemic vs.\ aleatoric separation.} Aleatoric uncertainty is captured by the empirical risk $\hat{R}_{P_s}(\rho)$, which reflects the inherent difficulty of the learning task. Epistemic uncertainty is the width of the imprecision interval, which has two components: the PAC-Bayesian complexity $\sqrt{\KL/2n}$ (estimation uncertainty from finite source data) and the shift penalty $L_\RKHS \cdot \varepsilon$ (distributional uncertainty from operating outside the source). This decomposition directly addresses H\"ullermeier \& Waegeman's (H\"ullermeier and Waegeman, 2021) call for principled separation of uncertainty sources.

\textbf{Decision criterion for adaptation.} Given a risk tolerance threshold $r_{\max}$, adaptation is warranted precisely when the imprecision interval straddles the decision boundary: $\overline{R}_\varepsilon(\rho) > r_{\max} > \underline{R}_\varepsilon(\rho)$. When the upper risk exceeds $r_{\max}$, adaptation may reduce risk. When the lower risk also exceeds $r_{\max}$, adaptation is futile (the risk is irreducibly high). When the upper risk is below $r_{\max}$, no adaptation is needed.

\textbf{Hypothesis testing connection.} Setting $\varepsilon$ via the asymptotic null distribution of the MMD two-sample test (Gretton et al., 2012) provides a calibrated credal set. Specifically, rejecting the null hypothesis $H_0: P_t = P_s$ at level $\alpha$ corresponds to $\mmd_u > \varepsilon_\alpha$. The credal set width then directly quantifies the evidence against the null, connecting classical two-sample testing to epistemic uncertainty quantification.

\section{Geodesic Preservation Under Shift}

Distribution shift can distort local feature geometry, degrading representations in ways that disproportionately harm rare classes with small decision regions. We formalize geometric preservation using geodesic distance in the RKHS induced by the learned kernel.

\begin{assumption}[Bounded Feature Map in RKHS]
\label{ass:bounded_feature}
The encoder factors as $f_\theta(x) = W \cdot \phi_\theta(x)$ with $\phi_\theta \in \RKHS$ and $\|W\|_{\mathrm{op}} \leq C_W$, where $\|W\|_{\mathrm{op}}$ denotes the spectral/operator norm.
\end{assumption}

This structural assumption requires that the encoder can be decomposed into a bounded linear map $W$ and a feature map $\phi_\theta$ in the RKHS. This holds approximately under several practical conditions: (i) the neural tangent kernel (NTK) regime, where the feature map approaches a fixed function in the NTK; (ii) explicit MMD regularization during training, which constrains the feature map; and (iii) spectral normalization of weight matrices, which bounds $\|W\|_{\mathrm{op}}$. We provide further discussion in Appendix~E.

\begin{proposition}[Geodesic Distortion Bound]
\label{prop:geodesic}
Under Assumption~\ref{ass:bounded_feature}, for any anchor point $x_i$ and radius $\bar{\epsilon}$ such that $\|f_\theta(x_i) - f_\theta(y)\| \leq \bar{\epsilon}$ for all $y$ in the relevant neighbourhood:
\begin{equation}
  \left|\E_{y \sim P_s}[d_k(x_i, y)] - \E_{y \sim P_t}[d_k(x_i, y)]\right| \leq \sqrt{2\gamma} \, C_W \, \MMD(P_s, P_t) + O(\bar{\epsilon}^2). \label{eq:geodesic}
\end{equation}
\end{proposition}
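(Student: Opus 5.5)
The plan is to treat $g_i(y) := d_k(x_i,y)$ as a test function and reduce the left-hand side of \eqref{eq:geodesic} to an inner product against $\mu_{P_s}-\mu_{P_t}$. This mirrors Step~2 of Theorem~\ref{thm:pac_bayesian_mmd}. Here I take $d_k(x,y)=\|\phi_\theta(x)-\phi_\theta(y)\|_\RKHS$, the RKHS (chordal) distance. Then
\[
d_k(x_i,y)^2 = 2-2k_\theta(x_i,y) = 2-2\exp\bigl(-\gamma\|f_\theta(x_i)-f_\theta(y)\|^2\bigr).
\]
Since $d_k$ is not itself in $\RKHS$, we cannot use the reproducing property directly. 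Instead we linearize it on the neighbourhood where $\|f_\theta(x_i)-f_\theta(y)\|\le\bar\epsilon$, and put everything nonlinear into the $O(\bar\epsilon^2)$ remainder.

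\textbf{Step 1 (local expansion).} Write $u=\|f_\theta(x_i)-f_\theta(y)\|$. Then $d_k = \sqrt{2(1-e^{-\gamma u^2})} = \sqrt{2\gamma}\,u\,(1+O(\gamma u^2))$. This gives
\[
d_k(x_i,y) = \sqrt{2\gamma}\,\|f_\theta(x_i)-f_\theta(y)\| + O(\bar\epsilon^2).
\]
Here $\gamma$ and the constants in $O(\cdot)$ are treated as fixed. The error is actually $O(\bar\epsilon^3)$, which is certainly $O(\bar\epsilon^2)$ for $\bar\epsilon\le 1$. Taking expectations under $P_s$ and $P_t$, whose mass lies in the relevant neighbourhood by hypothesis, and subtracting, reduces the claim to bounding
\[
\sqrt{2\gamma}\,\bigl|\E_{P_s}\|f_\theta(x_i)-f_\theta(y)\| - \E_{P_t}\|f_\theta(x_i)-f_\theta(y)\|\bigr|.
\]

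\textbf{Step 2 (transfer through $W$).} By Assumption~\ref{ass:bounded_feature}, $f_\theta(y)=W\phi_\theta(y)$. The map $y\mapsto\|f_\theta(x_i)-W\phi_\theta(y)\|$ is then a $C_W$-Lipschitz function of the feature $\phi_\theta(y)\in\RKHS$. I would now apply a first-order (directional) linearization around $\phi_\theta(x_i)$, or around the mean embedding. The norm is $1$-Lipschitz, so its composition with the bounded linear map $W$ is, to first order, a linear functional $h\mapsto\langle a, W h\rangle$ with $\|a\|\le1$. This functional is represented by the RKHS element $g=W^*a$, which has $\|g\|_\RKHS\le C_W$. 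The curvature correction of the norm is again absorbed into $O(\bar\epsilon^2)$. The difference of expectations then becomes
\[
\langle g,\mu_{P_s}-\mu_{P_t}\rangle_\RKHS \le C_W\,\MMD(P_s,P_t)
\]
by Cauchy--Schwarz. Combined with the factor $\sqrt{2\gamma}$ from Step~1, this yields \eqref{eq:geodesic}.

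\textbf{Main obstacle.} The delicate step is Step~2. The Euclidean norm is not linear, and its linearization direction $a$ depends on $y$ through the unit vector $(f_\theta(x_i)-f_\theta(y))/u$. That direction is ill-defined near $u=0$, so the expectation does not directly become an inner product with $\mu_{P_s}-\mu_{P_t}$. I would first try fixing a single direction $a$, taken as the direction of the mean difference, and bound the angular variation over the $\bar\epsilon$-neighbourhood, folding it into the remainder. If that fails, the fallback is to work with the squared distance $d_k^2 = 2-2k_\theta(x_i,\cdot)$. This function lies in $\RKHS$ with norm at most $2$, so it gives an exact MMD bound. The square root would then be recovered using the local regime, possibly at the cost of worse constants. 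Either way, the stated bound only holds up to the informal $O(\bar\epsilon^2)$ slack, so the proof should make explicit which constants that slack hides.
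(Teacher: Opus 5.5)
\textbf{Verdict: there is a genuine gap in Step~2, and it cannot be closed.} You were right to flag Step~2 as the obstacle. With an $O(\bar\epsilon^2)$ remainder whose constant depends only on $\gamma$ and $C_W$, the stated inequality is false.

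The reason is that $v\mapsto\|v\|$ has no first-order expansion at $v=0$. Moreover, $v(y)=f_\theta(x_i)-f_\theta(y)$ ranges over the whole $\bar\epsilon$-ball around the origin. So the ``angular variation'' you want to fold into the remainder is of order $\bar\epsilon$, not $\bar\epsilon^2$.

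\textbf{A counterexample.} Take $\cX=\{x_0,x_+,x_-\}$ with $f_\theta(x_0)=0$ and $f_\theta(x_\pm)=\pm\bar\epsilon e_1$, where $e_1\in\R^d$ is a unit vector. Use anchor $x_i=x_0$, $P_s=\delta_{x_0}$, $P_t=\tfrac12(\delta_{x_+}+\delta_{x_-})$, and set $t=\gamma\bar\epsilon^2$.
\begin{itemize}
\item The Gram matrix $K$ is positive definite, so every function on $\cX$ lies in $\RKHS$ with $\|g\|_\RKHS^2=g^\top K^{-1}g$.
\item Hence $Wh:=\langle g,h\rangle_\RKHS\,e_1$ with $g(x):=\langle e_1,f_\theta(x)\rangle$ satisfies Assumption~\ref{ass:bounded_feature}.
\item Since $(0,1,-1)$ is an eigenvector of $K$ with eigenvalue $1-e^{-4t}$, we get $C_W^2=\|g\|_\RKHS^2=2\bar\epsilon^2/(1-e^{-4t})\to 1/(2\gamma)$, which stays bounded.
\item On the other hand, $\MMD^2(P_s,P_t)=\tfrac32+\tfrac12e^{-4t}-2e^{-t}=3t^2+O(t^3)$. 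So the right-hand side of \eqref{eq:geodesic} is $O(\bar\epsilon^2)$.
\item The left-hand side is $d_k(x_0,x_+)=\sqrt{2(1-e^{-t})}\approx\sqrt{2\gamma}\,\bar\epsilon$. The same holds under the paper's expansion of $d_k$.
\end{itemize}
Here $\bar f_{P_s}=\bar f_{P_t}=0$. So every functional $h\mapsto\langle a,Wh\rangle$ of the kind your Step~2 produces has zero expectation difference, and no choice of direction $a$ can help.

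\textbf{Your fallback does not rescue this.} The bound $|\E_{P_s}d_k^2-\E_{P_t}d_k^2|=2|\langle\phi_\theta(x_i),\mu_{P_s}-\mu_{P_t}\rangle_\RKHS|\le 2\,\MMD(P_s,P_t)$ is correct. The constant $2$ is not in the Gaussian RKHS, but it cancels between two probability measures. However, in the example the squared distances differ by about $2\gamma\bar\epsilon^2$ while the distances differ by about $\sqrt{2\gamma}\,\bar\epsilon$. So no square-root recovery can give a bound linear in $\MMD$, even with worse constants.

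\textbf{Comparison with the paper.} The paper handles Step~2 differently. It applies the reverse triangle inequality to distances from $f_\theta(x_i)$ to the mean features $\bar f_P=W\mu_P$. This gives $\bigl|\|f_\theta(x_i)-\bar f_{P_s}\|-\|f_\theta(x_i)-\bar f_{P_t}\|\bigr|\le C_W\,\MMD(P_s,P_t)$ exactly, with no linearization. It then passes to $\E_P\|f_\theta(x_i)-f_\theta(y)\|$, paying the spread $\E_P\|f_\theta(y)-\bar f_P\|$.

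That device is cleaner than your linearization. But the paper itself says the spread is $O(\bar\epsilon)$, and then absorbs it into $O(\bar\epsilon^2)$ without justification. So the paper's argument breaks at exactly the same point yours does.

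What either route actually proves is the bound with remainder $O(\bar\epsilon)$. That is vacuous, because for your chordal $d_k$ the left-hand side never exceeds $\sqrt{2\gamma}\,\bar\epsilon$. The statements genuinely controlled by $\MMD$ are your squared-distance bound and the paper's distance-to-mean inequality. Your Step~1 is fine, and more explicit than the paper's metric-tensor sketch.
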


\textit{Proof sketch.} For the RBF kernel, the geodesic distance admits a local linear approximation: $d_k(x, y) = \sqrt{2\gamma} \|f_\theta(x) - f_\theta(y)\| + O(\bar{\epsilon}^2)$ for nearby points. Under Assumption~\ref{ass:bounded_feature}, the expectation difference reduces via the reverse triangle inequality to a quantity bounded by $C_W \cdot \MMD(P_s, P_t)$, scaled by $\sqrt{2\gamma}$. Full proof: Appendix~D.

\begin{corollary}[Rare-Class Robustness]
\label{cor:rare_class}
Entropy minimization---the core mechanism of many TTA methods such as TENT (Wang et al., 2021)---can collapse rare-class structure by treating regions with few training examples as high-entropy areas to be flattened. Proposition~\ref{prop:geodesic} shows that MMD-bounded adaptation preserves local RKHS geometry: geodesic distortion between source and target neighborhoods is controlled by $\MMD(P_s, P_t)$, independent of class frequency. Since rare classes occupy small but structurally coherent regions in feature space, this geometric preservation provides a formal argument for why kernel-guided adaptation (which explicitly controls MMD) is more robust to rare-class collapse than entropy-based methods.
\end{corollary}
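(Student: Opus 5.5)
The corollary is largely interpretive, so the plan is to isolate its one formal claim and derive it from Proposition~\ref{prop:geodesic}; the remaining assertions are then motivational readings. The formal content is this: for any anchor $x_i$ of any class $c$, with $\bar{\epsilon}$ the local radius of the class region, the average geodesic distance from $x_i$ to its neighbours changes under the shift by at most $\sqrt{2\gamma}\,C_W\,\MMD(P_s,P_t)+O(\bar{\epsilon}^2)$. The key feature of this bound is that it contains no factor depending on the class prior $P_s(y=c)$.

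\textbf{Step 1 (restrict to the class region).} Fix a rare class $c$ and an anchor $x_i$ whose label is $c$. Take the relevant neighbourhood to be the region $U_c$ in which $\|f_\theta(x_i)-f_\theta(y)\|\le\bar{\epsilon}$. Because rare classes occupy small regions, $\bar{\epsilon}$ is small, and this is exactly the regime in which the $O(\bar{\epsilon}^2)$ remainder is negligible.

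\textbf{Step 2 (apply the proposition).} Invoke Proposition~\ref{prop:geodesic} directly at this anchor. Its right-hand side involves only $\gamma$, $C_W$, $\MMD(P_s,P_t)$ and $\bar{\epsilon}$, so the bound is uniform over anchors and therefore uniform over classes. This uniformity is precisely the frequency-independence claimed in the corollary.

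\textbf{Step 3 (connect to kernel-guided adaptation).} A method that enforces $\mmd_u\le\varepsilon$ has, by the concentration step of Theorem~\ref{thm:finite_sample}, $\MMD(P_s,P_t)\le\varepsilon+\varepsilon_{m,n}(\delta/2)$ with high probability. Substituting this into the bound gives an explicit guarantee on geometric distortion for every class.

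\textbf{Step 4 (contrast with entropy minimization).} I would argue only qualitatively that entropy minimization provides no analogous control. Its objective carries no term tying $\mu_{P_t}$ to $\mu_{P_s}$, so the shift is unconstrained. A low-mass region can be absorbed into a neighbouring class at negligible cost in average entropy, because that cost is weighted by $P(y=c)$.

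\textbf{Main obstacle.} The difficulty is that the bound in Proposition~\ref{prop:geodesic} concerns expectations over the whole of $P_s$ and $P_t$, not over the class-conditional distributions. Making the statement genuinely class-specific would require conditioning on $U_c$. The natural tool is $\MMD(P_s|_{U_c},P_t|_{U_c})$, but it is controlled by $\MMD(P_s,P_t)$ only up to a factor of order $1/P_s(U_c)$, which reintroduces exactly the class-frequency dependence the corollary claims to avoid.

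I would therefore state the corollary honestly with the anchor-level (unconditional) expectation, where Step 2 applies verbatim. The stronger class-conditional claim would be flagged as heuristic: it holds only under an additional assumption that the shift restricted to each class region is itself MMD-small.
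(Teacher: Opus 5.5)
Your Step~2 is the paper's entire argument. The corollary has no separate proof; it simply observes that the right-hand side of Proposition~\ref{prop:geodesic} contains no class-prior term. Your conditional-versus-unconditional objection is a genuine weakness that the paper never addresses. It is in fact stronger than you state: under a hard restriction to $U_c$ there is no bound by $\MMD(P_s,P_t)$ at all. Take $P_s=\tfrac12(\delta_a+\delta_b)$ and $P_t=\tfrac12(\delta_{a'}+\delta_b)$ with $a,b\in U_c$ and $a'\notin U_c$ arbitrarily close to $a$. The global MMD tends to $0$, the restricted one stays at $\tfrac12\|\phi(a)-\phi(b)\|_{\RKHS}$, and $P_s(U_c)=1$.

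Your fallback, however, is inconsistent with Step~1. Applying Proposition~\ref{prop:geodesic} verbatim to unconditional expectations requires its hypothesis to hold for $P_s$- and $P_t$-almost every $y$, and likewise the local expansion of $d_k$ used inside its proof. So $\bar\epsilon$ must be the feature-space radius of $\mathrm{supp}\,P_s\cup\mathrm{supp}\,P_t$ around $f_\theta(x_i)$, not the radius of $U_c$. The remainder is then not negligible. The bound is frequency-independent only because class $c$ no longer appears in it.

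The more serious gap, which you share with the paper, is the passage from this bound to robustness against collapse. Your Step~4 says entropy minimization cannot protect class $c$ because its cost is weighted by $P(y=c)$. But MMD is weighted the same way. Suppose adaptation re-maps the target mass $p_c$ of class $c$ (feature law $Q_c$) onto a neighbouring cluster (law $Q'$). Then the target mean embedding moves by $p_c\|\mu_{Q_c}-\mu_{Q'}\|_{\RKHS}\le\sqrt{2}\,p_c$, since $0\le k_\theta\le 1$. Hence an MMD budget $\varepsilon$ is blind to the collapse of every class with $\sqrt{2}\,p_c\le\varepsilon-\MMD(P_s,P_t)$. At the extreme, a constant encoder gives $k_\theta\equiv 1$, $\MMD=0$ and zero geodesic distortion while erasing all class structure. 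The underlying problem is that Proposition~\ref{prop:geodesic} compares $P_s$ with $P_t$ under one fixed encoder. It says nothing about how the geometry moves when adaptation changes $\theta$, which is what collapse is. So Step~3's guarantee ``for every class'' does not exclude collapse.

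Step~3 also has a technical hole. If $\theta$ is tuned until $\mmd_u\le\varepsilon$, the kernel depends on the sample. The fixed-kernel concentration behind Theorem~\ref{thm:finite_sample} then does not apply without a bound uniform over $\{k_\theta\}$. A provable rare-class statement needs an explicit non-collapse hypothesis, not just an MMD constraint; one example would be a lower bound on inter-class RKHS separation after adaptation.
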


\section{Discussion}

\textbf{Limitations and future directions.} Assumption~\ref{ass:rkhs_lipschitz} requires the loss function to lie in the RKHS, which is non-trivial for deep networks with softmax outputs. While universality of the RBF kernel provides informal support, rigorous verification remains an open problem. A promising direction is to relax this to hold only on average over the posterior: $\E_{w \sim \rho}[\|L(w, \cdot)\|_\RKHS] \leq L_\RKHS$, which is easier to verify empirically via kernel ridge regression diagnostics. Similarly, Assumption~\ref{ass:bounded_feature} holds only approximately for standard architectures; developing tighter bounds for specific architectures (e.g., ResNets, Vision Transformers) is important future work. The MMD convergence rate $O(1/\sqrt{n})$, while minimax-optimal, may be loose in practice; adaptive kernel selection or bandwidth tuning could yield tighter data-dependent rates.

\textbf{Relation to conformal prediction.} Conformal prediction (Gibbs and Cand\`es, 2021; Angelopoulos and Bates, 2023) provides distribution-free prediction sets with marginal coverage $\Pr(y \in C(x)) \geq 1 - \alpha$ under distribution shift, but does not bound risk or quantify epistemic uncertainty at the distribution level. The credal width $\varepsilon$ quantifies distributional epistemic uncertainty (how much the target may differ from the source at the population level), while conformal methods quantify predictive uncertainty (the width of prediction sets at the individual level). A natural combination is to use $\varepsilon$ to adapt the conformal coverage level: when the credal set is narrow (low epistemic uncertainty), standard coverage $\alpha$ suffices; when it widens, coverage should increase via $\alpha(\varepsilon) = \alpha_0 + g(\varepsilon)$, where $g$ is calibrated from the PAC-Bayesian bound. We elaborate on this connection in Appendix~F.

\textbf{Broader significance for epistemic intelligence.} The credal set framework enables trustworthy deployment of adaptive models: by monitoring MMD in real-time during deployment, a system can trigger abstention or fallback mechanisms when epistemic uncertainty exceeds a tolerable threshold. This aligns directly with the goals of the EIML workshop, which emphasizes epistemic intelligence---the ability of models to reason about their own knowledge and limitations. Our framework provides formal guarantees for this reasoning, grounded in the well-established theories of PAC-Bayesian generalization and imprecise probability.

\bibliographystyle{plain}

\appendix

\section{Proof of Theorem~\ref{thm:pac_bayesian_mmd}}

We present the complete proof of the PAC-Bayesian bound with MMD shift penalty. The proof proceeds in three steps: (1) establishing the standard PAC-Bayesian source risk bound, (2) bounding the risk difference between target and source using MMD, and (3) combining via a union bound.

\textbf{Full proof of Theorem~\ref{thm:pac_bayesian_mmd}.}

\textbf{Step 1: PAC-Bayesian source risk bound.} Let $\{(x_i, y_i)\}_{i=1}^n$ be i.i.d.\ samples from $P_s$. By the PAC-Bayesian theorem of McAllester (McAllester, 1999) and Germain et al.\ (Germain et al., 2016), for any prior $\pi$ and posterior $\rho$, the following holds with probability at least $1 - \delta/2$ over the random sample:
\begin{equation}
  R_{P_s}(\rho) \leq \hat{R}_{P_s}(\rho) + \sqrt{\frac{\KL(\rho \| \pi) + \log(4\sqrt{n}/\delta)}{2n}}. \label{eq:step1}
\end{equation}
Here, $\hat{R}_{P_s}(\rho) = \E_{w \sim \rho}[\frac{1}{n}\sum_{i=1}^n \ell(w, x_i, y_i)]$ is the empirical risk averaged over the posterior. The term $\sqrt{\KL(\rho \| \pi)/2n}$ is the standard PAC-Bayesian complexity penalty, and $\log(4\sqrt{n}/\delta)$ arises from the failure probability budget. The factor of $4\sqrt{n}$ (rather than $2\sqrt{n}$) allocates half of the total failure probability budget to this step.

\textbf{Step 2: Bounding the risk difference via MMD.} Under covariate shift, the conditional expected loss $L(w, x) = \E_{y \sim P(y|x)}[\ell(w, x, y)]$ is the same function of $x$ for both $P_s$ and $P_t$, since $P_t(y|x) = P_s(y|x)$. We bound the risk difference:
\begin{align}
  |R_{P_t}(\rho) - R_{P_s}(\rho)| &= \left|\E_{w \sim \rho}\left[\E_{x \sim P_t}[L(w, x)] - \E_{x \sim P_s}[L(w, x)]\right]\right| \nonumber\\
  &\leq \E_{w \sim \rho}\left[\left|\E_{x \sim P_t}[L(w, x)] - \E_{x \sim P_s}[L(w, x)]\right|\right] \nonumber\\
  &= \E_{w \sim \rho}\left[\left|\langle \mu_{P_t} - \mu_{P_s}, L(w, \cdot) \rangle_\RKHS\right|\right] \nonumber\\
  &\leq \E_{w \sim \rho}\left[\|L(w, \cdot)\|_\RKHS \cdot \|\mu_{P_t} - \mu_{P_s}\|_\RKHS\right] \nonumber\\
  &\leq L_\RKHS \cdot \|\mu_{P_t} - \mu_{P_s}\|_\RKHS \nonumber\\
  &= L_\RKHS \cdot \MMD(P_s, P_t). \label{eq:step2}
\end{align}
We justify each line:
\begin{itemize}
  \item Line 1 to Line 2: Pull the absolute value inside the expectation over $w$ using Jensen's inequality for the convex absolute value function: $|\E[X]| \leq \E[|X|]$.
  \item Line 2 to Line 3: By the reproducing property of the RKHS. Since $L(w, \cdot) \in \RKHS$ (Assumption~\ref{ass:rkhs_lipschitz}), we have $\E_{x \sim P}[L(w, x)] = \E_{x \sim P}[\langle L(w, \cdot), \phi(x) \rangle_\RKHS] = \langle L(w, \cdot), \mu_P \rangle_\RKHS$. Therefore, $\E_{x \sim P_t}[L(w, x)] - \E_{x \sim P_s}[L(w, x)] = \langle L(w, \cdot), \mu_{P_t} - \mu_{P_s} \rangle_\RKHS$.
  \item Line 3 to Line 4: Apply the Cauchy-Schwarz inequality in $\RKHS$: $|\langle f, g \rangle_\RKHS| \leq \|f\|_\RKHS \cdot \|g\|_\RKHS$.
  \item Line 4 to Line 5: By Assumption~\ref{ass:rkhs_lipschitz}, $\|L(w, \cdot)\|_\RKHS \leq L_\RKHS$ for all $w$ in the support of $\rho$. Pull this constant outside the expectation.
  \item Line 5 to Line 6: By definition, $\|\mu_{P_t} - \mu_{P_s}\|_\RKHS = \MMD(P_s, P_t)$.
\end{itemize}

\textbf{Step 3: Union bound.} Combining Steps 1 and 2 via the triangle inequality, both events hold simultaneously with probability at least $1 - \delta$:
\begin{equation}
  R_{P_t}(\rho) \leq R_{P_s}(\rho) + L_\RKHS \cdot \MMD(P_s, P_t). \label{eq:step3a}
\end{equation}
Substituting Eq.~\ref{eq:step1} into Eq.~\ref{eq:step3a}:
\begin{equation}
  R_{P_t}(\rho) \leq \hat{R}_{P_s}(\rho) + \sqrt{\frac{\KL(\rho \| \pi) + \log(4\sqrt{n}/\delta)}{2n}} + L_\RKHS \cdot \MMD(P_s, P_t). \label{eq:step3b}
\end{equation}
The $\log(4\sqrt{n}/\delta)$ term can be tightened to $\log(2\sqrt{n}/\delta)$ using the tighter PAC-Bayesian analysis of Germain et al.\ (Germain et al., 2016), yielding the stated result (Eq.~\ref{eq:pac_bayesian_mmd}).

\section{Proof of Theorem~\ref{thm:finite_sample}}

We present the complete proof of the finite-sample PAC-Bayesian bound. The key additional ingredient beyond Theorem~\ref{thm:pac_bayesian_mmd} is the concentration of the unbiased MMD estimator.

\textbf{Full proof of Theorem~\ref{thm:finite_sample}.}

\textbf{Preliminaries: MMD estimation.} Given $m$ source samples $\{x_i^s\}_{i=1}^m \sim P_s$ and $n$ target samples $\{x_j^t\}_{j=1}^n \sim P_t$, the unbiased MMD estimator is:
\begin{align}
  \mmd_u^2 &= \frac{1}{m(m-1)} \sum_{i \neq j} k_\theta(x_i^s, x_j^s) + \frac{1}{n(n-1)} \sum_{i \neq j} k_\theta(x_i^t, x_j^t) \nonumber\\
  &\quad - \frac{2}{mn} \sum_{i,j} k_\theta(x_i^s, x_j^t). \label{eq:mmd_estimator}
\end{align}
This estimator satisfies $\E[\mmd_u^2] = \MMD^2(P_s, P_t)$.

\textbf{Concentration of the MMD estimator.} We use the concentration result for the biased MMD estimator, then relate it to the unbiased estimator. For a kernel bounded in $[0, 1]$ (satisfied by our RBF kernel with bandwidth parameter $\gamma$), Sutherland et al.\ (Sutherland et al., 2017) proved that for the biased MMD estimator $\mmd_b^2$ (which uses $1/m^2$ and $1/n^2$ normalization):
\begin{equation}
  \Pr\left[\left|\mmd_b^2 - \MMD^2\right| \geq t\right] \leq 2\exp\left(-\frac{\min(m, n) \cdot t^2}{2}\right). \label{eq:biased_concentration}
\end{equation}
Tolstikhin et al.\ (Tolstikhin et al., 2017) established that the same rate holds (up to constants) for the unbiased estimator. Using the relation between biased and unbiased estimators (the bias is $O(1/m + 1/n)$), we obtain for the square-root MMD:
\begin{equation}
  \Pr\left[\left|\mmd_u - \MMD\right| > \varepsilon\right] \leq 2\exp\left(-\frac{\min(m, n) \cdot \varepsilon^2}{2}\right). \label{eq:unbiased_concentration}
\end{equation}

\textbf{Union bound over two events.} Define the following two events:
\begin{itemize}
  \item $E_1$: The PAC-Bayesian bound holds, i.e., $R_{P_s}(\rho) \leq \hat{R}_{P_s}(\rho) + \sqrt{(\KL(\rho \| \pi) + \log(4\sqrt{n}/\delta))/(2n)}$. This holds w.p.\ $\geq 1 - \delta/2$.
  \item $E_2$: The MMD estimator concentrates, i.e., $|\mmd_u - \MMD| \leq \varepsilon_{m,n}$, where $\varepsilon_{m,n}$ is chosen so that $\Pr[E_2^c] \leq \delta/2$.
\end{itemize}

Setting the right-hand side of Eq.~\ref{eq:unbiased_concentration} equal to $\delta/2$:
\begin{equation}
  2\exp\left(-\frac{\min(m, n) \cdot \varepsilon_{m,n}^2}{2}\right) = \frac{\delta}{2} \implies \varepsilon_{m,n} = \sqrt{\frac{2\log(4/\delta)}{\min(m, n)}}. \label{eq:epsilon_mn}
\end{equation}

Both events hold simultaneously with probability at least $1 - \delta$:
\begin{equation}
  \Pr[E_1 \cap E_2] \geq 1 - \Pr[E_1^c] - \Pr[E_2^c] \geq 1 - \frac{\delta}{2} - \frac{\delta}{2} = 1 - \delta.
\end{equation}

\textbf{Substitution.} On $E_2$, we have $\MMD(P_s, P_t) \leq \mmd_u + \varepsilon_{m,n}$. Substituting this into the population bound (Eq.~\ref{eq:pac_bayesian_mmd}):
\begin{align}
  R_{P_t}(\rho) &\leq \hat{R}_{P_s}(\rho) + \sqrt{\frac{\KL(\rho \| \pi) + \log(2\sqrt{n}/\delta)}{2n}} + L_\RKHS \cdot \MMD(P_s, P_t) \nonumber\\
  &\leq \hat{R}_{P_s}(\rho) + \sqrt{\frac{\KL(\rho \| \pi) + \log(2\sqrt{n}/\delta)}{2n}} + L_\RKHS \cdot \left(\mmd_u + \varepsilon_{m,n}\right) \nonumber\\
  &\leq \hat{R}_{P_s}(\rho) + \sqrt{\frac{\KL(\rho \| \pi) + \log(4\sqrt{n}/\delta)}{2n}} + L_\RKHS \cdot \left(\mmd_u + \varepsilon_{m,n}\right), \label{eq:finite_sample_result}
\end{align}
where the last line absorbs the additional $\log 2$ factor into the PAC-Bayesian term to yield the stated result (Eq.~\ref{eq:finite_sample}).

\textbf{Remark on the concentration rate.} The rate $\varepsilon_{m,n} = O(1/\sqrt{\min(m, n)})$ is minimax-optimal for kernel mean embedding estimation (Tolstikhin et al., 2017). This means that no estimator can achieve a faster rate uniformly over the class of distributions with bounded kernel moments. In practice, the bound tightens as more target samples arrive during deployment, providing progressively tighter epistemic uncertainty estimates.

\section{Proofs for Credal Set Results}

\subsection{Proof of Lemma~\ref{lem:convexity} (Convexity and Closure)}

\textbf{Convexity.} Let $Q_1, Q_2 \in \cC_\varepsilon(P_s)$. We need to show that $Q_\lambda = \lambda Q_1 + (1 - \lambda) Q_2 \in \cC_\varepsilon(P_s)$ for any $\lambda \in [0, 1]$.

By linearity of the kernel mean embedding:
\begin{equation}
  \mu_{Q_\lambda} = \int \phi(x) \, dQ_\lambda(x) = \lambda \int \phi(x) \, dQ_1(x) + (1 - \lambda) \int \phi(x) \, dQ_2(x) = \lambda \mu_{Q_1} + (1 - \lambda) \mu_{Q_2}. \label{eq:convexity_kme}
\end{equation}
Therefore:
\begin{align}
  \MMD^2(P_s, Q_\lambda) &= \|\mu_{P_s} - \mu_{Q_\lambda}\|_\RKHS^2 = \|\mu_{P_s} - \lambda \mu_{Q_1} - (1 - \lambda) \mu_{Q_2}\|_\RKHS^2 \nonumber\\
  &= \|\lambda(\mu_{P_s} - \mu_{Q_1}) + (1 - \lambda)(\mu_{P_s} - \mu_{Q_2})\|_\RKHS^2. \label{eq:convexity_triangle}
\end{align}
Applying the triangle inequality to Eq.~\ref{eq:convexity_triangle}:
\begin{align}
  \|\mu_{P_s} - \mu_{Q_\lambda}\|_\RKHS &\leq \lambda \|\mu_{P_s} - \mu_{Q_1}\|_\RKHS + (1 - \lambda) \|\mu_{P_s} - \mu_{Q_2}\|_\RKHS \nonumber\\
  &= \lambda \cdot \MMD(P_s, Q_1) + (1 - \lambda) \cdot \MMD(P_s, Q_2) \nonumber\\
  &\leq \lambda \varepsilon + (1 - \lambda) \varepsilon = \varepsilon. \label{eq:convexity_bound}
\end{align}
The inequalities $\MMD(P_s, Q_1) \leq \varepsilon$ and $\MMD(P_s, Q_2) \leq \varepsilon$ follow from $Q_1, Q_2 \in \cC_\varepsilon(P_s)$. Squaring both sides of Eq.~\ref{eq:convexity_bound} gives $\MMD^2(P_s, Q_\lambda) \leq \varepsilon^2$, establishing $Q_\lambda \in \cC_\varepsilon(P_s)$.

\textbf{Weak closure.} Let $\{Q_n\}_{n=1}^\infty$ be a sequence in $\cC_\varepsilon(P_s)$ converging weakly to $Q$, i.e., $\int g \, dQ_n \to \int g \, dQ$ for all bounded continuous functions $g$. We need to show $Q \in \cC_\varepsilon(P_s)$, i.e., $\MMD(P_s, Q) \leq \varepsilon$.

Since $k$ is bounded and continuous (for the RBF kernel, $k \in [0, 1]$ and is continuous), and $\phi(x) = k(x, \cdot)$, we have that $\langle \phi(x), f \rangle_\RKHS = f(x)$ for all $f \in \RKHS$. In particular, for any fixed $z \in \cX$:
\begin{equation}
  \langle \mu_{Q_n}, \phi(z) \rangle_\RKHS = \int k(x, z) \, dQ_n(x) \to \int k(x, z) \, dQ(x) = \langle \mu_Q, \phi(z) \rangle_\RKHS. \label{eq:weak_conv}
\end{equation}
Since $\{\phi(z) : z \in \cX\}$ spans a dense subset of $\RKHS$ (by the reproducing property), weak convergence of $Q_n$ to $Q$ implies $\mu_{Q_n} \to \mu_Q$ in $\RKHS$. By continuity of the norm:
\begin{equation}
  \MMD(P_s, Q) = \|\mu_{P_s} - \mu_Q\|_\RKHS = \lim_{n \to \infty} \|\mu_{P_s} - \mu_{Q_n}\|_\RKHS = \lim_{n \to \infty} \MMD(P_s, Q_n) \leq \varepsilon. \label{eq:weak_closure_final}
\end{equation}
The final inequality holds because $Q_n \in \cC_\varepsilon(P_s)$ for all $n$.

\subsection{Proof of Proposition~\ref{prop:worst_case} (Worst-Case Risk)}

\textbf{Full proof of Proposition~\ref{prop:worst_case}.} Fix any $Q \in \cC_\varepsilon(P_s)$. By definition, $\MMD(P_s, Q) \leq \varepsilon$. Under Assumption~\ref{ass:rkhs_lipschitz} and covariate shift (which also holds for the pair $(P_s, Q)$ since the conditional $P(y|x)$ is the same), Theorem~\ref{thm:pac_bayesian_mmd} gives, w.p.\ $\geq 1 - \delta$:
\begin{equation}
  R_Q(\rho) \leq \hat{R}_{P_s}(\rho) + \sqrt{\frac{\KL(\rho \| \pi) + \log(2\sqrt{n}/\delta)}{2n}} + L_\RKHS \cdot \MMD(P_s, Q) \leq \hat{R}_{P_s}(\rho) + \sqrt{\frac{\KL(\rho \| \pi) + \log(2\sqrt{n}/\delta)}{2n}} + L_\RKHS \cdot \varepsilon. \label{eq:worst_case_pointwise}
\end{equation}
The key observation is that the right-hand side of Eq.~\ref{eq:worst_case_pointwise} does not depend on the specific choice of $Q$---it depends only on $\varepsilon$, which is fixed. Therefore, the same bound holds uniformly over $Q$:
\begin{equation}
  \sup_{Q \in \cC_\varepsilon(P_s)} R_Q(\rho) \leq \hat{R}_{P_s}(\rho) + \sqrt{\frac{\KL(\rho \| \pi) + \log(2\sqrt{n}/\delta)}{2n}} + L_\RKHS \cdot \varepsilon. \label{eq:worst_case_uniform}
\end{equation}

\subsection{Proof of Corollary~\ref{cor:risk_imprecision} (Risk Imprecision Interval)}

\textbf{Full proof of Corollary~\ref{cor:risk_imprecision}.}

\textbf{Upper risk bound.} The upper risk $\overline{R}_\varepsilon(\rho) = \sup_{Q \in \cC_\varepsilon(P_s)} R_Q(\rho)$ is bounded by Proposition~\ref{prop:worst_case}:
\begin{equation}
  \overline{R}_\varepsilon(\rho) \leq \hat{R}_{P_s}(\rho) + \sqrt{\frac{\KL(\rho \| \pi) + \log(2\sqrt{n}/\delta)}{2n}} + L_\RKHS \cdot \varepsilon. \label{eq:upper_risk}
\end{equation}

\textbf{Lower risk bound.} We use the PAC-Bayesian lower bound of Germain et al.\ (Germain et al., 2016), which states that w.p.\ $\geq 1 - \delta$:
\begin{equation}
  R_{P_s}(\rho) \geq \hat{R}_{P_s}(\rho) - \sqrt{\frac{\KL(\rho \| \pi) + \log(2\sqrt{n}/\delta)}{2n}}. \label{eq:pac_lower}
\end{equation}
Now, for any $Q \in \cC_\varepsilon(P_s)$, the MMD risk transfer (Step 2 of Theorem~\ref{thm:pac_bayesian_mmd}) gives:
\begin{equation}
  R_Q(\rho) \geq R_{P_s}(\rho) - |R_Q(\rho) - R_{P_s}(\rho)| \geq R_{P_s}(\rho) - L_\RKHS \cdot \MMD(P_s, Q) \geq R_{P_s}(\rho) - L_\RKHS \cdot \varepsilon. \label{eq:lower_transfer}
\end{equation}
Combining Eq.~\ref{eq:pac_lower} and Eq.~\ref{eq:lower_transfer}:
\begin{equation}
  R_Q(\rho) \geq \hat{R}_{P_s}(\rho) - \sqrt{\frac{\KL(\rho \| \pi) + \log(2\sqrt{n}/\delta)}{2n}} - L_\RKHS \cdot \varepsilon. \label{eq:lower_combined}
\end{equation}
Since Eq.~\ref{eq:lower_combined} holds for every $Q \in \cC_\varepsilon(P_s)$, it also holds for the infimum:
\begin{equation}
  \underline{R}_\varepsilon(\rho) = \inf_{Q \in \cC_\varepsilon(P_s)} R_Q(\rho) \geq \hat{R}_{P_s}(\rho) - \sqrt{\frac{\KL(\rho \| \pi) + \log(2\sqrt{n}/\delta)}{2n}} - L_\RKHS \cdot \varepsilon. \label{eq:lower_inf}
\end{equation}

\textbf{Imprecision width.} Subtracting Eq.~\ref{eq:lower_inf} from Eq.~\ref{eq:upper_risk}:
\begin{align}
  \overline{R}_\varepsilon(\rho) - \underline{R}_\varepsilon(\rho) &\leq \left(\hat{R}_{P_s}(\rho) + \sqrt{\frac{\KL(\rho \| \pi) + \log(2\sqrt{n}/\delta)}{2n}} + L_\RKHS \cdot \varepsilon\right) \nonumber\\
  &\quad - \left(\hat{R}_{P_s}(\rho) - \sqrt{\frac{\KL(\rho \| \pi) + \log(2\sqrt{n}/\delta)}{2n}} - L_\RKHS \cdot \varepsilon\right) \nonumber\\
  &= 2\sqrt{\frac{\KL(\rho \| \pi) + \log(2\sqrt{n}/\delta)}{2n}} + 2L_\RKHS \cdot \varepsilon. \label{eq:imprecision_width_proof}
\end{align}
This completes the proof.

\section{Proof of Proposition~\ref{prop:geodesic}}

\textbf{Full proof of Proposition~\ref{prop:geodesic}.}

\textbf{Step 1: Geodesic distance in the RKHS.} For the RBF kernel $k_\theta(x, y) = \exp(-\gamma \|f_\theta(x) - f_\theta(y)\|^2)$ on the feature space $\R^d$, the geodesic distance $d_k(x, y)$ is the length of the shortest path between $x$ and $y$ on the manifold induced by the kernel. For nearby points (small $\|f_\theta(x) - f_\theta(y)\|$), the geodesic distance admits a local linear expansion.

To derive this, note that the pullback Riemannian metric induced by the RKHS inner product on $\R^d$ is given by the metric tensor $G$. For the Gaussian kernel, the induced metric tensor $G(x)$ has eigenvalues that scale with $\gamma$. Specifically, for a point $x$ and a nearby point $y = x + \delta$ with $\|\delta\| \leq \bar{\epsilon}$:
\begin{equation}
  d_k(x, y) = \sqrt{2\gamma} \|f_\theta(x) - f_\theta(y)\| + O(\bar{\epsilon}^2). \label{eq:geodesic_local}
\end{equation}
This can be verified by computing the metric tensor of the Gaussian kernel: $G_{ij}(x) = 4\gamma^2 \sum_k (f_\theta^{(k)}(x) - f_\theta^{(k)}(y))^2 \delta_{ij} + \ldots$, and noting that the leading-order term of the geodesic distance in this metric is $\sqrt{2\gamma}$ times the Euclidean distance in feature space.

\textbf{Step 2: Taking expectations.} Let $\bar{d}_P(x_i) = \E_{y \sim P}[d_k(x_i, y)]$ denote the expected geodesic distance from anchor $x_i$ under distribution $P$. From Eq.~\ref{eq:geodesic_local}:
\begin{equation}
  \bar{d}_P(x_i) = \sqrt{2\gamma} \, \E_{y \sim P}[\|f_\theta(x_i) - f_\theta(y)\|] + O(\bar{\epsilon}^2). \label{eq:expected_geodesic}
\end{equation}

\textbf{Step 3: Bounding the expectation difference.} Define the mean feature vector under $P$ as $\bar{f}_P = \E_{y \sim P}[f_\theta(y)] = W \cdot \mu_P^\phi$ where $\mu_P^\phi$ is the kernel mean embedding with respect to the feature map $\phi_\theta$.

By the reverse triangle inequality:
\begin{align}
  \left|\|f_\theta(x_i) - \bar{f}_{P_s}\| - \|f_\theta(x_i) - \bar{f}_{P_t}\|\right| &\leq \|\bar{f}_{P_t} - \bar{f}_{P_s}\| \nonumber\\
  &= \|W \cdot (\mu_{P_t}^\phi - \mu_{P_s}^\phi)\| \nonumber\\
  &\leq \|W\|_{\mathrm{op}} \cdot \|\mu_{P_t}^\phi - \mu_{P_s}^\phi\|_\RKHS \nonumber\\
  &= C_W \cdot \|\mu_{P_t} - \mu_{P_s}\|_\RKHS = C_W \cdot \MMD(P_s, P_t). \label{eq:expectation_diff}
\end{align}
The third line uses Assumption~\ref{ass:bounded_feature} ($\|W\|_{\mathrm{op}} \leq C_W$). The fourth line uses the definition of the kernel mean embedding and the fact that $\|\mu_{P_t}^\phi - \mu_{P_s}^\phi\| = \|\mu_{P_t} - \mu_{P_s}\|_\RKHS$ since $\phi_\theta \in \RKHS$ and $\mu_P^\phi = \mu_P$ when $\phi_\theta$ is the feature map.

\textbf{Step 4: From mean distance to expected distance.} We need to relate $\E_{y \sim P}[\|f_\theta(x_i) - f_\theta(y)\|]$ to $\|f_\theta(x_i) - \bar{f}_P\|$. By Jensen's inequality applied to the convex norm function:
\begin{equation}
  \left|\E_{y \sim P}[\|f_\theta(x_i) - f_\theta(y)\|] - \|f_\theta(x_i) - \bar{f}_P\|\right| \leq \E_{y \sim P}\left[\|f_\theta(y) - \bar{f}_P\|\right], \label{eq:jensen_norm}
\end{equation}
where the right-hand side measures the spread of the feature distribution and is bounded by the standard deviation of $f_\theta(y)$ under $P$, which is $O(\bar{\epsilon})$ in the local neighbourhood.

Combining with Eq.~\ref{eq:expectation_diff} and Eq.~\ref{eq:expected_geodesic}:
\begin{align}
  |\bar{d}_{P_s}(x_i) - \bar{d}_{P_t}(x_i)| &= \sqrt{2\gamma} \left|\E_{y \sim P_s}[\|f_\theta(x_i) - f_\theta(y)\|] - \E_{y \sim P_t}[\|f_\theta(x_i) - f_\theta(y)\|]\right| + O(\bar{\epsilon}^2) \nonumber\\
  &\leq \sqrt{2\gamma} \cdot C_W \cdot \MMD(P_s, P_t) + O(\bar{\epsilon}^2). \label{eq:geodesic_bound_final}
\end{align}
This yields the stated result (Eq.~\ref{eq:geodesic}).

\section{Detailed Discussion of Assumptions}

\subsection{Assumption~\ref{ass:rkhs_lipschitz}: RKHS-Lipschitz Loss}

Assumption~\ref{ass:rkhs_lipschitz} requires that the conditional expected loss function $L(w, \cdot) : \cX \to \R$ belongs to the RKHS $\RKHS$ with bounded norm. We discuss three perspectives on this assumption:

\textbf{(a) Softmax cross-entropy.} For a model with softmax outputs $\sigma(z) = (e^{z_1} / \sum_j e^{z_j}, \ldots, e^{z_K} / \sum_j e^{z_j})$ and cross-entropy loss $\ell(w, x, y) = -\log \sigma_{y}(z(x))$, the gradient satisfies $\|\nabla_z \log \sigma_y(z)\| \leq 1$ (by the Lipschitz property of the log-softmax). When the feature map $\phi(x)$ is the neural network embedding and $k$ is a universal RBF kernel on the embedding space, the universality of $k$ ensures that $L(w, \cdot)$ can be approximated arbitrarily well in $\RKHS$. However, universality alone does not guarantee bounded RKHS norm---the norm depends on the smoothness of $L(w, \cdot)$ relative to the kernel bandwidth.

\textbf{(b) RKHS norm estimation.} The RKHS norm $\|L(w, \cdot)\|_\RKHS$ can be estimated empirically via kernel ridge regression: given source features $\{x_i\}$ and loss values $\{L_i = \ell(w, x_i, y_i)\}$, fit $\hat{L}_w = \arg\min_{f \in \RKHS} \sum_i (f(x_i) - L_i)^2 + \lambda \|f\|_\RKHS^2$. The RKHS norm of the solution is $\|\hat{L}_w\|_\RKHS^2 = \hat{\alpha}^\top K \hat{\alpha}$ where $\hat{\alpha} = (K + \lambda I)^{-1} L$ and $K$ is the kernel Gram matrix. This provides a data-driven way to verify the assumption and estimate $L_\RKHS$.

\textbf{(c) Relaxation to average boundedness.} A weaker but more practical version replaces $\|L(w, \cdot)\|_\RKHS \leq L_\RKHS$ with $\E_{w \sim \rho}[\|L(w, \cdot)\|_\RKHS] \leq L_\RKHS$. This holds whenever the posterior concentrates on models with well-behaved loss functions and is significantly easier to verify empirically (it requires only the posterior-averaged norm). The proof of Theorem~\ref{thm:pac_bayesian_mmd} extends directly to this setting by applying Jensen's inequality to move the expectation inside the norm bound.

\subsection{Assumption~\ref{ass:bounded_feature}: Bounded Feature Map}

Assumption~\ref{ass:bounded_feature} requires the encoder to decompose as $f_\theta(x) = W \cdot \phi_\theta(x)$ with bounded operator norm. We discuss three settings where this holds:

\textbf{(i) Neural tangent kernel (NTK) regime.} In the infinite-width limit of neural networks, the NTK is fixed at initialization and the feature map $\phi_\theta$ converges to a deterministic function. In this regime, $W$ is effectively the output layer, and spectral normalization of the output weights bounds $\|W\|_{\mathrm{op}}$.

\textbf{(ii) Explicit MMD regularization.} If the model is trained with an MMD regularization term $\lambda \cdot \MMD^2(P_s, P_t)$, the optimization implicitly constrains the feature map to lie in a well-behaved subset of the RKHS, providing control over the operator norm of the linear component.

\textbf{(iii) Spectral normalization.} For standard architectures (e.g., ResNet-50), applying spectral normalization to all weight matrices bounds the operator norm of each layer. Since the composition of bounded linear maps has bounded operator norm (by submultiplicativity), this provides a bound on $\|W\|_{\mathrm{op}}$ for the overall network. In practice, ResNet-50 with spectral normalization typically achieves $\|W\|_{\mathrm{op}} \leq 10$--$20$ across layers, suggesting approximate satisfaction of this assumption.

\section{Connection to Conformal Prediction}

Conformal prediction (Gibbs and Cand\`es, 2021; Angelopoulos and Bates, 2023) provides distribution-free prediction sets $C(x)$ satisfying the marginal coverage guarantee $\Pr_{(x,y) \sim P}(y \in C(x)) \geq 1 - \alpha$ under minimal assumptions (exchangeability of the data). Under covariate shift, Gibbs and Cand\`es (Gibbs and Cand\`es, 2021) showed that conformal methods can be adapted by weighting the non-conformity scores, maintaining valid coverage.

However, conformal prediction and our credal set framework address fundamentally different types of uncertainty:
\begin{itemize}
  \item \textbf{Conformal prediction} quantifies predictive uncertainty: the width of the prediction set $C(x)$ reflects uncertainty about the label $y$ for a given input $x$. This is primarily aleatoric in nature (though it can also capture some estimation uncertainty).
  \item \textbf{Credal set width} $\varepsilon$ quantifies distributional epistemic uncertainty: how much the target distribution $P_t$ may differ from the source $P_s$ at the population level. This is purely epistemic---it reflects limitations in our knowledge of the data-generating process.
\end{itemize}

\textbf{Proposed combination.} We propose combining both frameworks by using the credal width to adapt the conformal coverage level. Define an adaptive coverage function:
\begin{equation}
  \alpha(\varepsilon) = \alpha_0 + g(\varepsilon), \label{eq:adaptive_coverage}
\end{equation}
where $\alpha_0$ is the base coverage level (e.g., $\alpha_0 = 0.1$) and $g : \R_+ \to [0, 1 - \alpha_0]$ is a monotonically non-decreasing function calibrated from the PAC-Bayesian bound. The intuition is straightforward: when the credal set is narrow ($\varepsilon \approx 0$, low epistemic uncertainty), standard coverage $\alpha_0$ suffices because we are confident the target is close to the source. When the credal set widens ($\varepsilon$ large, high epistemic uncertainty), coverage should increase to compensate for the additional distributional uncertainty.

\textbf{Calibration of $g(\varepsilon)$.} A principled choice for $g$ can be derived from our PAC-Bayesian bound. Specifically, set:
\begin{equation}
  g(\varepsilon) = \min\left\{1 - \alpha_0, \, \frac{\hat{R}_{P_s}(\rho) + \frac{L_\RKHS \cdot \varepsilon}{\sqrt{\KL/2n}}}{1 + L_\RKHS \cdot \varepsilon}\right\}, \label{eq:calibration}
\end{equation}
which scales the coverage increase proportionally to the fraction of the upper risk attributable to the shift penalty. This ensures that the conformal prediction sets widen precisely when the epistemic uncertainty (as measured by the credal set) contributes significantly to the overall risk bound. Formal analysis of the coverage properties of this combined approach is an important direction for future work.

\section{RKHS and MMD Background}

\subsection{Reproducing Kernel Hilbert Spaces}

A reproducing kernel Hilbert space (RKHS) $\RKHS$ on a set $\cX$ is a Hilbert space of functions $f : \cX \to \R$ equipped with an inner product $\langle \cdot, \cdot \rangle_\RKHS$ satisfying the reproducing property: for every $f \in \RKHS$ and every $x \in \cX$,
\begin{equation}
  f(x) = \langle f, k(x, \cdot) \rangle_\RKHS, \label{eq:reproducing}
\end{equation}
where $k(x, \cdot) \in \RKHS$ is called the reproducing kernel. The Moore-Aronszajn theorem establishes a one-to-one correspondence between positive definite kernels and RKHSes: given a positive definite kernel $k$, there exists a unique RKHS for which $k$ is the reproducing kernel.

A kernel $k$ is called \emph{characteristic} if the kernel mean embedding map $\mu : P \mapsto \mu_P$ is injective. For characteristic kernels, $\MMD(P, Q) = 0$ if and only if $P = Q$, making MMD a proper metric on the space of probability distributions. Universal kernels (such as the Gaussian/RBF kernel on compact subsets of $\R^d$) are characteristic.

\subsection{Properties of MMD}

The maximum mean discrepancy satisfies several useful properties that we exploit throughout this paper:

\textbf{(1) Metric properties.} MMD is a pseudometric on the space of probability distributions. With a characteristic kernel, it is a proper metric: $\MMD(P, Q) = 0 \Leftrightarrow P = Q$, symmetry ($\MMD(P, Q) = \MMD(Q, P)$), and the triangle inequality.

\textbf{(2) Bilinear form.} MMD can be expressed as a bilinear form:
\begin{equation}
  \MMD^2(P, Q) = \E_{x, x' \sim P}[k(x, x')] + \E_{y, y' \sim Q}[k(y, y')] - 2\E_{x \sim P, y \sim Q}[k(x, y)]. \label{eq:mmd_bilinear}
\end{equation}

\textbf{(3) Connection to integral probability metrics.} MMD is a special case of the integral probability metric (IPM) with function class $\cF_\RKHS = \{f \in \RKHS : \|f\|_\RKHS \leq 1\}$:
\begin{equation}
  \MMD(P, Q) = \sup_{f \in \cF_\RKHS} \left|\E_{x \sim P}[f(x)] - \E_{y \sim Q}[f(y)]\right|. \label{eq:mmd_ipm}
\end{equation}
This IPM representation is the key ingredient in our proof of Theorem~\ref{thm:pac_bayesian_mmd}: Assumption~\ref{ass:rkhs_lipschitz} ensures that $L(w, \cdot)/L_\RKHS \in \cF_\RKHS$, allowing us to apply the supremum representation to bound $|R_{P_t}(\rho) - R_{P_s}(\rho)|$.

\textbf{(4) Concentration.} For kernels bounded in $[0, B]$, the unbiased MMD estimator satisfies the following concentration inequality (Sutherland et al., 2017):
\begin{equation}
  \Pr\left[\left|\mmd_u - \MMD\right| > \varepsilon\right] \leq 2\exp\left(-\frac{c \cdot \min(m, n) \cdot \varepsilon^2}{B^2}\right), \label{eq:mmd_concentration}
\end{equation}
where $c$ is an absolute constant. This is the concentration result used in the proof of Theorem~\ref{thm:finite_sample}.

\subsection{Learned Kernels for TTA}

In the TTA setting, we use a learned kernel $k_\theta(x, y) = \exp(-\gamma \|f_\theta(x) - f_\theta(y)\|^2)$ where $f_\theta$ is the encoder of the pretrained model. This choice has several advantages:
\begin{itemize}
  \item \textbf{Task-adaptive:} The kernel is adapted to the task through the encoder, capturing task-relevant similarity structure.
  \item \textbf{Bounded:} $k_\theta \in [0, 1]$, which ensures the concentration results of Appendix~B apply directly.
  \item \textbf{Universal:} For fixed $f_\theta$ with full-rank Jacobian almost everywhere, the Gaussian kernel on the feature space is universal, hence characteristic.
\end{itemize}
The kernel bandwidth parameter $\gamma$ controls the resolution of the MMD comparison. In practice, $\gamma$ can be set via the median heuristic or cross-validated on source data.

\section{PAC-Bayesian Preliminaries}

\subsection{The PAC-Bayesian Framework}

PAC-Bayesian analysis provides data-dependent generalization bounds that hold uniformly over a family of posteriors. The framework was introduced by McAllester (McAllester, 1999) and has since been extensively developed (Seeger, 2002; Catoni, 2007; Germain et al., 2016; Rivasplata et al., 2020; Alquier, 2024).

The key objects are:
\begin{itemize}
  \item A \emph{prior} $\pi$ over hypothesis/parameter space, chosen before seeing the data.
  \item A \emph{posterior} $\rho$ over hypothesis/parameter space, chosen after seeing the data.
  \item The KL divergence $\KL(\rho \| \pi) = \E_{w \sim \rho}[\log(\rho(w)/\pi(w))] \geq 0$, which measures the complexity of the adaptation.
\end{itemize}

The classical PAC-Bayesian theorem states: for $n$ i.i.d.\ samples and any $\delta > 0$, w.p.\ $\geq 1 - \delta$ over the sample, simultaneously for all posteriors $\rho$:
\begin{equation}
  R_P(\rho) \leq \hat{R}_P(\rho) + \sqrt{\frac{\KL(\rho \| \pi) + \log(n/\delta)}{2(n - 1)}}. \label{eq:pac_classical_appendix}
\end{equation}

The key strength of PAC-Bayesian bounds is their uniformity: they hold for all $\rho$ simultaneously, including posteriors that depend on the data. This makes them ideal for adaptation settings where the posterior is chosen after observing test data.

\subsection{PAC-Bayesian Lower Bound}

Germain et al.\ (Germain et al., 2016) also established a complementary lower bound: w.p.\ $\geq 1 - \delta$:
\begin{equation}
  R_P(\rho) \geq \hat{R}_P(\rho) - \sqrt{\frac{\KL(\rho \| \pi) + \log(2n/\delta)}{2(n - 1)}}. \label{eq:pac_lower_appendix}
\end{equation}

This lower bound is essential for our lower-upper risk decomposition (Corollary~\ref{cor:risk_imprecision}), as it provides a lower bound on the best-case risk within the credal set. Without the lower bound, we could only upper-bound the worst-case risk but could not quantify the precision of our uncertainty estimates.

\subsection{Connection to Domain Adaptation}

Germain et al.\ (Germain et al., 2013) derived PAC-Bayesian bounds for domain adaptation using the $\mathcal{H}$-divergence (Ben-David et al., 2010) between source and target domains. Their bound has the form:
\begin{equation}
  R_{P_t}(\rho) \leq \hat{R}_{P_s}(\rho) + \tfrac{1}{2} d_\mathcal{H}(P_s, P_t) + \text{complexity terms}, \label{eq:germain_da}
\end{equation}
where $d_\mathcal{H}(P_s, P_t)$ is the $\mathcal{H}$-divergence. Our work differs in that we use MMD (which is computable in polynomial time, unlike $\mathcal{H}$-divergence which is NP-hard to estimate), provide a finite-sample version, and---most importantly---interpret the shift penalty through the lens of credal sets and imprecise probability.


\begin{thebibliography}{10}

\bibitem[Alquier(2024)]{alquier2024}
Pierre Alquier.
\newblock A user-friendly introduction to {PAC-Bayes} bounds.
\newblock \emph{arXiv preprint arXiv:2211.03053}, 2024.

\bibitem[Angelopoulos and Bates(2023)]{angelopoulos2023}
Anastasios N. Angelopoulos and Stephen Bates.
\newblock A gentle introduction to conformal prediction: A framework for distribution-free uncertainty quantification.
\newblock 2023.

\bibitem[Ben-David et~al.(2010)]{bendavid2010}
Shai Ben-David, John Blitzer, Koby Crammer, and Fernando Pereira.
\newblock A theory of learning from different domains.
\newblock \emph{Machine Learning}, 79:151--175, 2010.

\bibitem[Catoni(2007)]{catoni2007}
Olivier Catoni.
\newblock \emph{{PAC-Bayesian} supervised classification: The thermodynamics of statistical learning}.
\newblock Lecture Notes in Mathematics, 2007.

\bibitem[Corani et~al.(2022)]{corani2022}
Giorgio Corani, Alessandro Antonucci, and Marco Zaffalon.
\newblock Classification.
\newblock pages 215--254, 2022.

\bibitem[Destercke et~al.(2008)]{destercke2008}
S\'ebastien Destercke, Didier Dubois, and Eric Chojnacki.
\newblock Specificity in imprecise probabilistic models.
\newblock In \emph{Proceedings of the IPMU2008 Conference}, 2008.

\bibitem[Germain et~al.(2016)]{germain2016}
Pascal Germain, Francis Bach, Alexandre Lacoste, and Simon Lacoste-Julien.
\newblock {PAC-Bayesian} theory meets {B}ayesian inference.
\newblock In \emph{Advances in Neural Information Processing Systems}, volume 29, 2016.

\bibitem[Germain et~al.(2013)]{germain2013}
Pascal Germain, Amaury Habrard, Fran\c{c}ois Laviolette, and Emilie Morvant.
\newblock A {PAC-Bayesian} approach for domain adaptation with specialization to linear classifiers.
\newblock In \emph{Proceedings of the 30th International Conference on Machine Learning}, pages 768--776, 2013.

\bibitem[Gibbs and Cand\`es(2021)]{gibbs2021}
Isaac Gibbs and Emmanuel Cand\`es.
\newblock Adaptive conformal inference under distribution shift.
\newblock \emph{Proceedings of the National Academy of Sciences}, 118(43), 2021.

\bibitem[Gretton et~al.(2012)]{gretton2012}
Arthur Gretton, Karsten~M. Borgwardt, Malte~J. Rasch, Bernhard Sch\"olkopf, and Alexander Smola.
\newblock A kernel two-sample test.
\newblock \emph{Journal of Machine Learning Research}, 13:723--773, 2012.

\bibitem[H\"ullermeier and Waegeman(2021)]{hullermeier2021}
Eyke H\"ullermeier and Willem Waegeman.
\newblock Uncertainty quantification in machine learning: One size does not fit all.
\newblock In \emph{Proceedings of the AAAI Conference on Artificial Intelligence}, volume 35, pages 14082--14084, 2021.

\bibitem[McAllester(1999)]{mcallester1999}
David McAllester.
\newblock Some {PAC-Bayesian} theorems.
\newblock \emph{Machine Learning}, 37:355--363, 1999.

\bibitem[Miranda and Zaffalon(2022)]{miranda2022}
Enrique Miranda and Marco Zaffalon.
\newblock Probability and statistics.
\newblock pages 93--148, 2022.

\bibitem[Muandet et~al.(2017)]{muandet2017}
Krik Muandet, Kenji Fukumizu, Bharath~K. Sriperumbudur, and Bernhard Sch\"olkopf.
\newblock Kernel mean embedding of distributions: A review and beyond.
\newblock \emph{Foundations and Trends in Machine Learning}, 10(1-2):1--141, 2017.

\bibitem[Niu et~al.(2023)]{niu2023}
Shuaicheng Niu, Jiaxiang Wu, Yifan Zhang, Yaofo Chen, Shijian Zheng, Peilin Zhao, and Mingkui Tan.
\newblock Towards stable test-time adaptation in dynamic wild world.
\newblock In \emph{International Conference on Learning Representations}, 2023.

\bibitem[Rivasplata et~al.(2020)]{rivasplata2020}
Omar Rivasplata, Pranjal Kamalaruban, Zoubin Ghahramani, and Emre G\"oz\"u.
\newblock {PAC-Bayes} survey.
\newblock \emph{arXiv preprint arXiv:2010.00147}, 2020.

\bibitem[Seeger(2002)]{seeger2002}
Matthias Seeger.
\newblock {PAC-Bayesian} generalisation error bounds for {G}aussian process classification.
\newblock \emph{Journal of Machine Learning Research}, 3:233--269, 2002.

\bibitem[Sriperumbudur et~al.(2009)]{sriperumbudur2009}
Bharath~K. Sriperumbudur, Arthur Gretton, Kenji Fukumizu, Bernhard Sch\"olkopf, and Gert~R.~G. Lanckriet.
\newblock Kernel choice and classifiability.
\newblock In \emph{Advances in Neural Information Processing Systems}, volume 22, 2009.

\bibitem[Su et~al.(2022)]{su2022}
Yuhang Su, Zhi Liu, Yong Zhang, Xing Yong, Jie Cheng, Qingjie Zeng, and Zengfu Gao.
\newblock Revisiting realistic test-time training: Sequential inference and adaptation by anchored clustering.
\newblock In \emph{Advances in Neural Information Processing Systems}, volume 35, 2022.

\bibitem[Sutherland et~al.(2017)]{sutherland2017}
Dougal~J. Sutherland, Hsiao-Yu Tung, Heiko Strathmann, Soumyajit De, Balaji Lakshminarayanan, and Arnaud Doucet.
\newblock Generative models and model criticism via optimized maximum mean discrepancy.
\newblock In \emph{International Conference on Learning Representations}, 2017.

\bibitem[Tolstikhin et~al.(2017)]{tolstikhin2017}
Ilya Tolstikhin, Bharath~K. Sriperumbudur, Krik Muandet, and Bernhard Sch\"olkopf.
\newblock Minimax estimation of kernel mean embeddings.
\newblock \emph{Journal of Machine Learning Research}, 18:1--47, 2017.

\bibitem[Troffaes and Destercke(2023)]{troffaes2023}
Matthias~C.~M. Troffaes and S\'ebastien Destercke.
\newblock \emph{Introduction to Imprecise Probabilities}.
\newblock Wiley, 2023.

\bibitem[Walley(1991)]{walley1991}
Peter Walley.
\newblock \emph{Statistical Reasoning with Imprecise Probabilities}.
\newblock Chapman and Hall, 1991.

\bibitem[Wang et~al.(2021)]{wang2021}
Dequan Wang, Evan Shelhamer, Fuxin Liu, Bruno Olshausen, and Trevor Darrell.
\newblock Tent: Fully test-time adaptation by entropy minimization.
\newblock In \emph{International Conference on Learning Representations}, 2021.

\bibitem[Yuan et~al.(2023)]{yuan2023}
Luyao Yuan, Yong Zhang, Xing Wang, and Liang Wang.
\newblock Robust test-time adaptation in dynamic scenarios.
\newblock In \emph{Proceedings of the IEEE/CVF Conference on Computer Vision and Pattern Recognition}, pages 10512--10521, 2023.

\bibitem[Zhang et~al.(2022a)]{zhang2022a}
Marvin Zhang, Sergey Levine, and Chelsea Finn.
\newblock Memo: Test time robustness via adaptation and augmentation.
\newblock In \emph{Advances in Neural Information Processing Systems}, volume 35, 2022.

\bibitem[Zhang et~al.(2022b)]{zhang2022b}
Yue Zhang, Mingmin Chen, Xiyuxing Zhang, and Liang Wang.
\newblock A survey on test-time adaptation under distribution shifts.
\newblock \emph{arXiv preprint arXiv:2210.05365}, 2022.

\end{thebibliography}
\end{document}